\newtheorem{theorem}{Theorem}
\def\textsub#1{\ensuremath{_{\mbox{\textscale{.6}{#1}}}}}
\begin{document}
%
% paper title
% Titles are generally capitalized except for words such as a, an, and, as,
% at, but, by, for, in, nor, of, on, or, the, to and up, which are usually
% not capitalized unless they are the first or last word of the title.
% Linebreaks \\ can be used within to get better formatting as desired.
% Do not put math or special symbols in the title.
\title{Image Reconstruction using Matched Wavelet Estimated from Data Sensed Compressively  using Partial Canonical Identity Matrix}
%
%
% author names and IEEE memberships
% note positions of commas and nonbreaking spaces ( ~ ) LaTeX will not break
% a structure at a ~ so this keeps an author's name from being broken across
% two lines.
% use \thanks{} to gain access to the first footnote area
% a separate \thanks must be used for each paragraph as LaTeX2e's \thanks
% was not built to handle multiple paragraphs
%

\author{Naushad~Ansari,~\IEEEmembership{Student Member,~IEEE,}
        Anubha~Gupta,~\IEEEmembership{Senior Member,~IEEE,}
        \vspace{-1.2em}
%        and~Jane~Doe,~\IEEEmembership{Life~Fellow,~IEEE}% <-this % stops a space
\thanks{\newline Naushad Ansari and Anubha Gupta are with SBILab, Deptt. of ECE, IIIT-Delhi, India. emails: naushada@iiitd.ac.in and anubha@iiitd.ac.in.\newline Naushad Ansari has been supported by CSIR (Council of Scientific and Industrial Research), Govt. of India, for this research work.}% <-this % stops a space
%\thanks{J. Doe and J. Doe are with Anonymous University.}% <-this % stops a space
%\thanks{Manuscript received April 19, 2005; revised August 26, 2015.}
}

\maketitle
% As a general rule, do not put math, special symbols or citations
% in the abstract or keywords.
\begin{abstract}
This paper proposes a joint framework wherein lifting-based, separable, image-matched wavelets are estimated from compressively sensed (CS) images and used for the reconstruction of the same. Matched wavelet can be easily designed if full image is available. Also matched wavelet may provide better reconstruction results in CS application compared to standard wavelet sparsifying basis. Since in CS application, we have compressively sensed image instead of full image, existing methods of designing matched wavelet cannot be used. Thus, we propose a joint framework that estimates matched wavelet from the compressively sensed images and also reconstructs full images. This paper has three significant contributions. First, lifting-based, image-matched separable wavelet is designed from compressively sensed images and is also used to reconstruct the same. Second, a simple sensing matrix is employed to sample data at sub-Nyquist rate such that sensing and reconstruction time is reduced considerably without any noticeable degradation in the reconstruction performance. Third, a new multi-level L-Pyramid wavelet decomposition strategy is provided for separable wavelet implementation on images that leads to improved reconstruction performance. Compared to CS-based reconstruction using standard wavelets with Gaussian sensing matrix and with existing wavelet decomposition strategy, the proposed methodology provides faster and better image reconstruction in compressive sensing application.
\end{abstract}
% Note that keywords are not normally used for peerreview papers.
\begin{IEEEkeywords}
Compressive Sensing, Matched wavelet, Lifting, Wavelet decomposition
\end{IEEEkeywords}

\IEEEpeerreviewmaketitle
\vspace{-0.8em}
\section{Introduction}
\label{Intro}
\IEEEPARstart{C}{lassical} signal acquisition technique in signal processing branch involves sensing the full signal at or above the Nyquist rate. In general, this signal is transformed to a domain where it is compressible. Only some of the largest coefficients of transformed signal having sufficient amount of energy are stored and transmitted to the receiver end along with the position information of the transmitted coefficients. The transmitted signal is decoded at the receiver end to recover the original signal. Thus, this process involves sensing the full signal, although most of the samples in the transformed domain are to be discarded. 

In \cite{candes2006robust,donoho2006compressed}, researchers proposed compressed sensing (CS) method that combines sensing and compression in one stage, where instead of sampling the signal sample-wise above Nyquist rate, it's projections are captured via the measurement basis. These samples are very few as compared to those sampled at Nyquist rate. If the signal is sparse in some transform domain and also if the measurement basis are incoherent with the sparsifying basis, then the whole signal can be reconstructed with a good probability from a very few projections of the signal.

In this context, wavelets are extensively used as sparsifying basis in compressed sensing problem \cite{donoho2006compressed}. One of the advantage with wavelets is that there is no unique basis unlike Fourier transform. One may choose the set of basis depending upon the type of application. Since the wavelet basis are not unique, it is better to design wavelet that are matched to a given signal in a particular application. The designed wavelet basis are called signal-matched wavelet \cite{gupta2005new1, gupta2005new2, ansari2015signal}. 

Motivated with the above discussion, this paper proposes to design matched wavelets for compressive sensing application. Unlike previous works on matched wavelets \cite{gupta2005new1, gupta2005new2,ansari2015signal,chapa2000algorithms, claypoole1998adaptive,piella2005gradient,heijmans2002building} where wavelets are designed from fully sampled signal, this paper proposes a novel method of designing signal matched wavelets from compressively sensed images at the receiver. The proposed method employs lifting based framework, also called second generation wavelet framework, to design image-matched wavelet \cite{sweldens1996lifting}. 

Many researchers have designed wavelets using lifting \cite{dong2008signal,piella2005gradient,quellec2010adaptive,liu2008weighted,zhang2006design,blackburn2009two,vrankic2010adaptive}\cite{ansari2015signal,ansari2015Joint} that requires design of predict and update stage filters. For images-matched wavelet design, these filters are required to be designed from the given image itself. Although it is easy to design predict stage filters, design of update stage filters offer a real challenge. One of the criteria used in the literature to design the update stage filter is to minimize the reconstruction error\cite{dong2008signal}. In \cite{piella2005gradient}, update step is chosen based on the local gradient of the signal. In \cite{quellec2010adaptive}, authors use non-separable lifting based approach of wavelet implementation on images with regularity conditions imposed. In \cite{liu2008weighted}, directional interpolation in update stage is used wherein coefficients of interpolation filter adapt to the statistical property of image. In \cite{zhang2006design}, allpass filter in the lifting stage is used to design orthogonal IIR (Infinite Impulse Response) filterbank. In \cite{blackburn2009two}, geometry of the image is used to design wavelet via lifting that leads to local and anisotropic filters. Authors have designed nonseparable filterbanks that are pixel-wise adaptive to the local image feature in \cite{vrankic2010adaptive}. In \cite{ansari2015signal}, authors designed predict stage by minimizing the energy in the wavelet subspace domain and designed update stage by minimizing the difference between reconstructed and original signal leaving wavelet signal. 

However, all the methods discussed above design signal-matched wavelet using the full signal at the input. So far, to the best of our knowledge, no method has been proposed that designs signal or image-matched wavelets from compressive or partial measurements. This paper addresses this problem from the point of view of image reconstruction. Below are the salient contributions of this paper:
\begin{enumerate}
\item We propose design of image-matched separable wavelet in the lifting framework from compressively sensed image, that is also used to reconstruct the image.  
\item In general, Gaussian or Bernoulli measurement matrices are used in compressive sensing application \cite{donoho2006compressed},\cite{takhar2006new}. We propose to use partial canonical identity matrix to sample data at sub-Nyquist rate such that sensing time is reduced considerably.
\item For the separable 2D wavelet transform, a new multi-level wavelet decomposition strategy is proposed that leads to improved reconstruction performance. We name this new wavelet decomposition strategy as multi-level L-Pyramid wavelet decomposition.
\end{enumerate}

This paper is organized as follows: In section-\ref{Section For Background}, we briefly present the theory on compressive sensing and lifting-based wavelet design. In section-\ref{Section For CI}, we propose to use partial canonical identity (PCI) matrix to sub-sample images at sub-Nyquist rate. We show results on time complexity and performance of the conventionally used sensing matrices and PCI matrix to establish the use of latter in this work. In section-\ref{Section For Proposed Strategy}, we propose a new multi-level wavelet decomposition strategy for separable wavelet on images and compare the performance with the existing strategy. Section-\ref{Section For Proposed Method of Matched Wavelet} presents the proposed joint framework of matched wavelet estimation and image reconstruction in CS. We present experimental results in section-\ref{Section for Experimental Results} and conclusions in section-\ref{Section for Conclusion}. 
\vspace{-0.4em}
\section{Background}
\label{Section For Background}
In this section, we briefly present the theory of compressive sensing and lifting framework of wavelets for the sake of self-completeness of the paper.
\vspace{-1em}
\subsection{Compressed Sensing}
\label{Section for CS Theory}
Classical compression method involves two steps: sensing and compression wherein, first, an analog data is sampled at or above the Nyquist-rate and then, it is compressed by an appropriate transform coding process. In general, natural signals are sparse or compressible in some transform domain. For examples, if a signal is smooth, it is compressible in Fourier domain and if it is piece-wise smooth, it is sparse in the wavelet domain. To understand this process, let us consider a signal $\mathbf{x}$ of dimension $N \times 1$ that has been sensed by a traditional sensing technique at or above the Nyquist rate. This signal is next transformed to sparse domain with the help of a sparsifying basis $\psi_i, i=1,2,...,N $ and represented as:
\begin{equation}
\mathbf{x}=\mathbf{\Psi} \mathbf{s}
\end{equation}
A signal $\textbf{s}$ is $K-$ sparse if all but $K$ elements are zero, whereas a signal is compressible if its sorted coefficients obey the power low decay \cite{baraniuk2011introduction}
\begin{equation}
s_j=Cj^{-q}, \ j=1,2,...,N,
\end{equation} 
where $s_j$ represent the sorted coefficients and $q$ represents decay power parameter. For large value of $q$, decay of coefficients is faster and correspondingly, signal is more compressible. In compression, some of the largest coefficients of the transformed signal are kept and all other coefficients are discarded. These coefficients along with their location information are sent to the receiver. Having the knowledge of the sparsifying basis and signal coefficients along with their positions in the original signal, signal is reconstructed back at the receiver end. 

The above process consisting of first sensing the whole signal and then discarding many of its transform domain coefficients is inefficient. Compressive sampling or sensing \cite{candes2006robust,donoho2006compressed,candes2006near} combines these two processes. Instead of sampling the signal at or above the Nyquist rate, signal's linear  projection on some measurement basis $\phi_i$ are obtained. If $\phi_i$ is the $i^{th}$ measurement basis, then $i^{th}$ observation of the projected signal is given by:
\vspace{-0.5em}
\begin{equation}
y[i]=\sum_{j=0}^{N-1} \phi_{i,j} x[j], i=0,1,...,M-1,
\vspace{-0.5em}
\end{equation}
where $M$ is the number of linear projections of the signal. In compact form, this can be written as: 
\begin{align}
\mathbf{y}_{M \times 1}=&\mathbf{\Phi}_{M \times N} \mathbf{x}_{N \times 1}, \nonumber \\
=&\mathbf{\Phi \Psi s} \nonumber \\
=&\mathbf{As},
\label{CS Equation}
\end{align}
where $i^{th}$ measurement basis is stacked as a row of the matrix $\mathbf{\Phi}$ and $ \mathbf{A}=\mathbf{\Phi \Psi}$. CS theory states that the original signal of length $N$ can be recovered with very high probability, if number of linear projections $M$ are taken such that \cite{candes2006robust}:
\begin{equation}
M \geq CK \log(N/K)
\label{CS Condition}
\end{equation} 
where $K$ is the sparsity of the signal, \textit{C} is some constant, and $M \ll N$ in general.

Equation \eqref{CS Equation} represents under-determined system of linear equations with $\mathbf{y}=\bm{\Phi}\mathbf{\hat{x}}$ having infinite many solutions $\mathbf{\hat{x}}$. But if the signal is sparse in some transform domain $\mathbf{\Psi}$, \eqref{CS Equation} can be solved using 
$l_0$ minimization as below:
\begin{equation}
\tilde{\mathbf{s}} = \min_{\mathbf{s}} ||\mathbf{s}||_0 \text{     subject to:     } \mathbf{y}=\mathbf{A} \mathbf{s}.
\end{equation} 
The above problem is NP-hard to solve. It has been shown in \cite{sharon2007computation} that $l_1$ minimization
 \begin{equation}
\tilde{\mathbf{s}} = \min_{\mathbf{s}} ||\mathbf{s}||_1 \text{     subject to:     } \mathbf{y}=\mathbf{A} \mathbf{s},
\label{CS solution}
\end{equation} 
provides the same solution as $l_0$ minimization. Here $||\mathbf{v}||_1$ denotes the $l^1$ norm or sum of the absolute values of the vector $\mathbf{v}$. $l^1$ minimization is known as Basis Pursuit (BP) in literature and can be solved by linear programming \cite{chen2001atomic}. 

Compressive sensing is being used increasingly in image reconstruction, also called as compressive imaging (CI). For example, let us consider an image $\mathbf{X}$ of dimension $m \times n$, that is compressively sensed by a measurement matrix $\mathbf{\Phi}$. These measurements are given by
\begin{align}
\mathbf{y}&=\mathbf{\Phi} vec(\mathbf{X}),\nonumber \\
&=\mathbf{\Phi}\mathbf{x}
\label{eq5}
\end{align}
where $vec(\mathbf{X})=\textbf{x}$ denotes the vector of length $N=mn$ of image $\mathbf{X}$ and the measured signal $\mathbf{y}$ is of dimension $M \times 1$, where $M$ is the number of compressive measurements. It has been observed that natural images, in general, are compressible in DCT (discrete cosine transform) \cite{shen1998dct} and wavelet domains \cite{skodras2001jpeg}. Hence, DCT or wavelet can be applied as separable transforms on images and used as sparsifying basis $\mathbf{\Psi}$ in \eqref{CS Equation} in CS-based image reconstruction. 

The measurement or the sensing basis $\mathbf{\Phi}$ can be chosen such that it satisfies Restricted Isometry Property (RIP) \cite{candes2008restricted} and coherency property \cite{donoho2006compressed}. Some of the examples of the measurement matrices that satisfy these properties are random matrices with entries taken from i.i.d. Gaussian distribution \cite{chen2005condition}, random matrices with entries taken from uniform Bernoulli distributions \cite{candes2006near}, and Fourier matrix \cite{donoho2006compressed}; although several other structured measurement matrices such as toeplitz and circulant matrices are also being used \cite{bajwa2007toeplitz,rauhut2010compressive,rauhut2012restricted}.
\vspace{-1em}
\subsection{Lifting Theory}
\label{Section For Lifting Theory}
Lifting is a technique for either factoring existing wavelet filters into a finite sequence of smaller filtering steps or constructing new customized wavelets from existing wavelets \cite{daubechies1998factoring}. Lifting-based wavelet design is also known as second generation wavelet design. The design is modular, guarantees perfect reconstruction at every stage, and supports non-linear filters. A general lifting scheme consists of three steps: Split, Predict, and Update (Refer to Fig. 1). 

\textit{Split}: In the split step, input signal is split into two disjoint sets of samples, generally, even and odd indexed samples, labeled as $x_e[n]$ and $x_o[n]$, respectively. The original signal can be recovered perfectly by interlacing or combining the two sample streams. The corresponding filterbank is called as the \textit{Lazy Wavelet} system \cite{sweldens1996lifting} and is similar to the structure shown in Fig. 2 with analysis filters labeled as $H_0(z)=Z\{h_0[n]\}$, $H_1(z)=Z\{h_1[n]\}$ and synthesis filters as $F_0(z)=Z\{f_0[n]\}$, $F_1(z)=Z\{f_1[n]\}$.    

\textit{Predict Step}: In the predict step, also known as dual Lifting step, one of the two disjoint set of samples is predicted from the other set of samples. For example, in Fig. 1(a), we predict odd set of samples from the neighboring even samples by using the predictor $P \equiv T(z)$. Predict step is equivalent to applying a highpass filter on the input signal. Predict step modifies the highpass filter of the analysis end and lowpass filter of the synthesis end, without altering other filters, according to the following relations: 
\begin{equation}
H_{1}^{new}(z) = H_1(z)-H_0(z)T(z^{2}),
 \label{eq:no1}
\end{equation}
\begin{equation}
F_{0}^{new}(z) = F_0(z)+F_1(z)T(z^{2}).
 \label{eq:no2}
\end{equation}

\textit{Update Step}: In the update step, also known as primal lifting step, predicting samples of the predict step are updated with the predicted samples to provide the approximate coefficients of the signal. The signal is updated with $U \equiv S(z)$ (refer to Fig. 1). This step modifies the analysis lowpass filter and synthesis highpass filter according to the following relation:
\begin{equation}
H_{0}^{new}(z) = H_0(z)+H_1(z)S(z^{2}),
 \label{eq:no3}
\end{equation}
\begin{equation}
F_{1}^{new}(z) = F_1(z)-F_0(z)S(z^{2}).
 \label{eq:no4}
\end{equation}

Once all the filters are designed, Fig. 1 can be equivalently drawn as Fig. \ref{WaveletFilterbank} or any existing wavelet system of Fig. \ref{WaveletFilterbank} can be equivalently broken into lifting steps of Fig. 1. One of the major advantages of lifting scheme is that each stage (predict or update) is invertible. Hence, perfect reconstruction (PR) is guaranteed after every step. 
\begin{figure}
\begin{center}
\includegraphics[scale=0.6, trim =6mm 4mm 6mm 1mm]{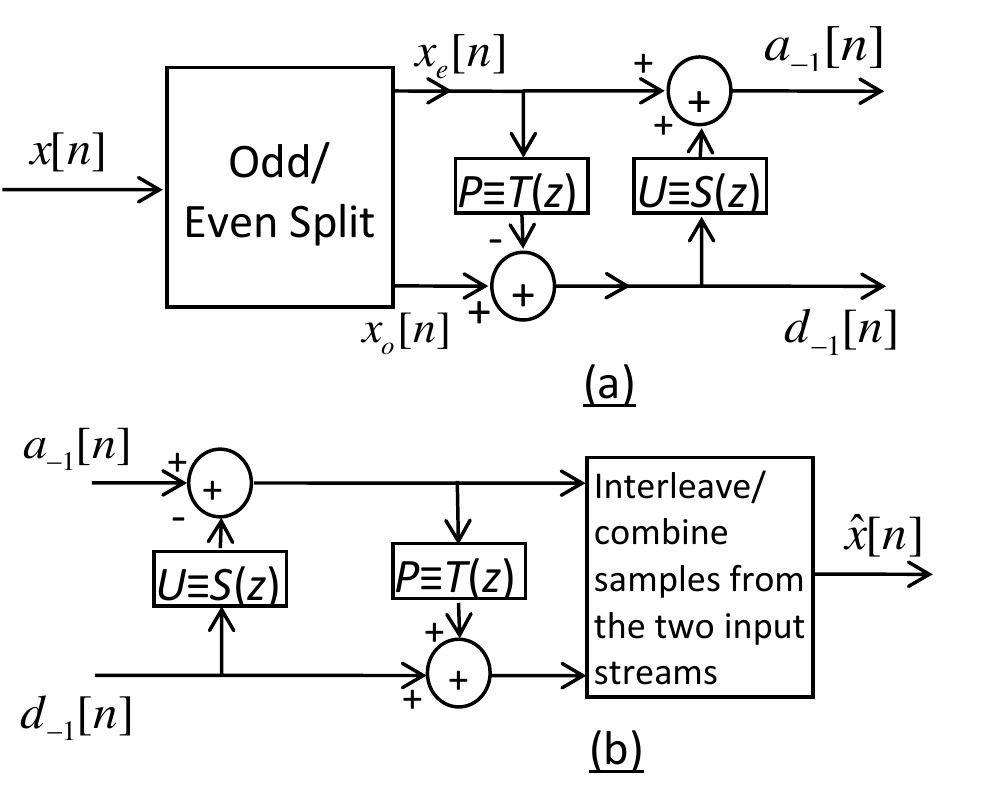}
\end{center}
\vspace{-0.4em}
\caption{Steps of Lifting: Split, Predict and Update}
\vspace{-1.4em}
\end{figure}
\begin{figure}
\begin{center}
\includegraphics[scale=0.6, trim =6mm 3mm 6mm 1mm]{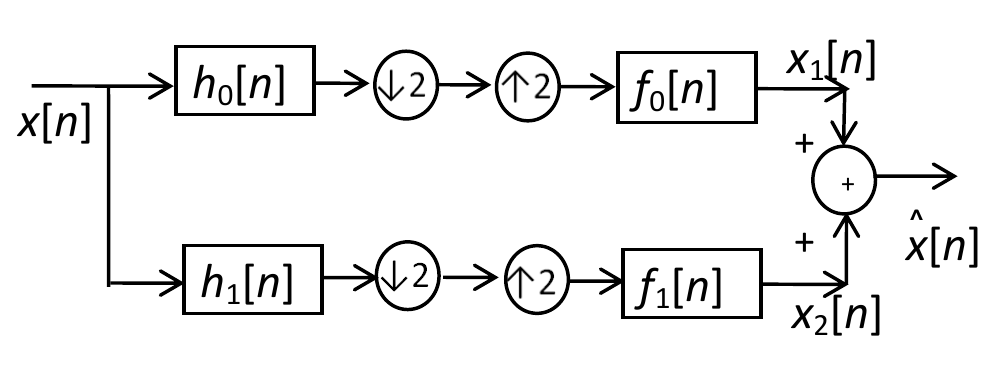}
\vspace{-0.6em}
\caption{Two Channel Biorthogonal Wavelet System}
\label{WaveletFilterbank}
\end{center}
\vspace{-2.0em}
\end{figure}
\vspace{-0.8em}
\section{Compressive Sensing of Images using Proposed Sensing Matrix}
\label{Section For CI}
In this section, first we establish the need for a different, other than conventional, sensing matrix. Next, we discuss the proposed matrix. And later, we present results to show the comparison of time complexity and reconstruction performance with the proposed matrix in CS based image reconstruction. 
\vspace{-1em}
\subsection{Context}
In today's world, size of images are increasingly large and $N$ generally approaches to millions of samples. This large size imaging poses challenges for CS-based image reconstruction. First challenge is the huge size of measurement matrix $\Phi$ that poses problems with storage and computation. Other problems include design of imaging system with larger space bandwidth product (SBP) and difficult calibration requirements \cite{rivenson2009compressed}.

In an attempt to overcome the above challenges, single pixel camera hardware architecture has been proposed in \cite{takhar2006new}. It replaces the traditional camera architecture and captures the inner product between the scene under view and measurement basis. Thus, the camera captures one pixel at a time that is a linear combination of all pixel samples of the image. This process is repeated $M$ number of times with $M \ll N$. These are called the compressive measurements and are transmitted to the receiver where full sized image is reconstructed by employing the theory of CS-based reconstruction. For more information on single pixel camera, reader may refer to \cite{takhar2006new}. 

With the above explained architecture, a single pixel camera replaces the photon detector array of a traditional camera by a single photon detector; thus, reducing the size, cost, and complexity of the imaging architecture. 

Although the above hardware architecture implements compressive imaging (CI) nicely, it suffers with some difficulties including sensor dynamic range, A/D quantization, and photon counting effects \cite{takhar2006new}. Also, this process is time consuming as one has to wait for $M$ samples that are captured sequentially. This is a serious problem in real-time applications, say, when one has to record a video using camera as the scene may change while capturing samples sequentially of the current scene under view. Also, since $M$ linear projections are captured instead of $N$ pixel samples, it ``effectively" samples the image at Nyquist rate instead of actually sampling the image at the sub-Nyquist rate. 
\vspace{-0.8em}
\subsection{Proposed Use of Partial Canonical Identity Sensing Matrix}
We propose to use PCI sensing matrix that as per our knowledge, is the simplest sensing matrix proposed so far and ``actually" senses the image at sub-Nyquist rate by capturing less number of pixels without sensing information about every pixel. This is explained as below.

Consider an image $\mathbf{X}$ of dimension $m \times n$. Instead of sampling all the $N (N=m n)$ pixels of the image using the sensor array of the traditional camera, we capture $M$ samples of the image using the proposed measurement matrix $\mathbf{\Phi}^p$, where $M \ll N$. The measurement matrix $\mathbf{\Phi}^p$ has the entries shown below:
\begin{equation}
\mathbf{\Phi}^p_{i,j}=
\left\{
		\begin{array}{ll}
			1 \quad \mbox{  if  } i \in \{1,2,...,M\} \mbox{ and } j \in \Omega \\
			0 \quad \mbox{otherwise}		
		\end{array}    
\right.
\end{equation}
where $\Omega\subset \{1,2,...,N\} $ such that $|\Omega|=M $ and $|.|$ denotes the cardinality of the the set. We name the proposed sensing matrix as partial canonical identity (PCI) matrix because it consists of partially selected and permuted rows of the identity matrix.  

The PCI matrix captures only $M$ samples of the actual image; thus, actually sub-samples the original image. This can be accomplished by using existing cameras by switching ON only $M$ sensors of the sensor array. This is unlike the single pixel camera where every captured pixel is the linear combination of the entire image pixel set. Also, in single pixel camera, one has to wait for $M$ units of time to sense $M$ number of samples, whereas all $M$ samples are sensed in one unit of time in the case of PCI matrix. Thus, PCI matrix reduces the sensing time by a factor of $M$ in comparison to a single pixel camera.
\vspace{-0.5em}
\subsection{Results using PCI Sensing Matrix}
Consider a sub-sampled Lena image, shown in Fig. \ref{SubSampled Image}, with only 50\% samples captured via PCI sensing matrix. The un-captured positions are filled with zeros. Fig. \ref{Reconstructed Image} shows the image reconstructed from this sub-sampled image using \eqref{CS solution} with standard wavelet `db4' as the sparsifying basis. From Fig. \ref{Reconstructed Image}, we note that PCI matrix has good ability of reconstructing images from compressively sensed data and can be used in CS-based image reconstruction. Fig.  \ref{TimeComparisonWithMeasurementMatrices} and \ref{PSNRComparisonWithMeasurementMatrices} provide detailed results.
\begin{figure}[!ht]
\centering
\begin{subfigure}[b]{0.24\textwidth}
\centering
\includegraphics[scale=0.48]{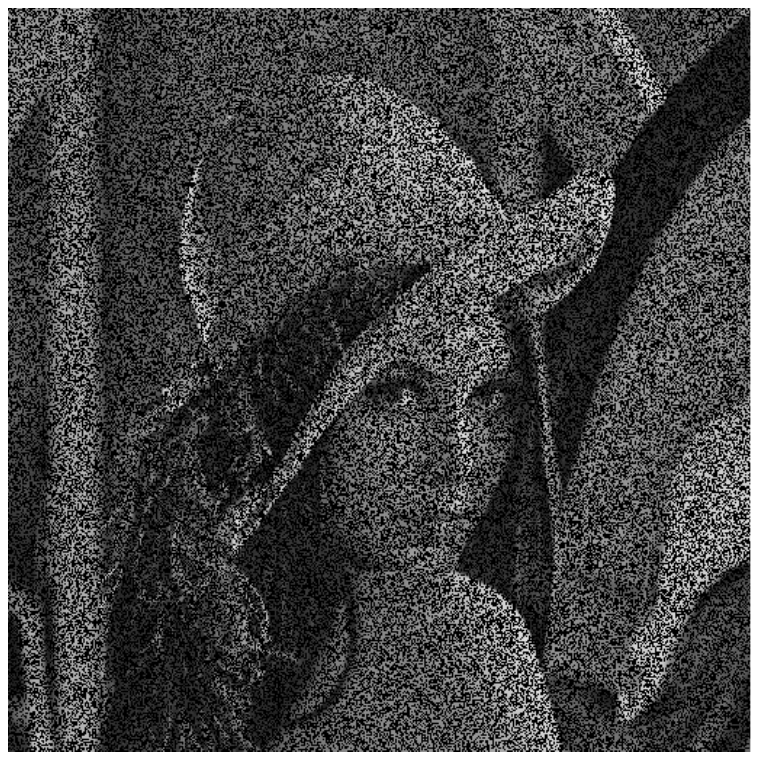}
\caption{}
\label{SubSampled Image}
\end{subfigure}
\begin{subfigure}[b]{0.24\textwidth}
\centering
\includegraphics[scale=0.46]{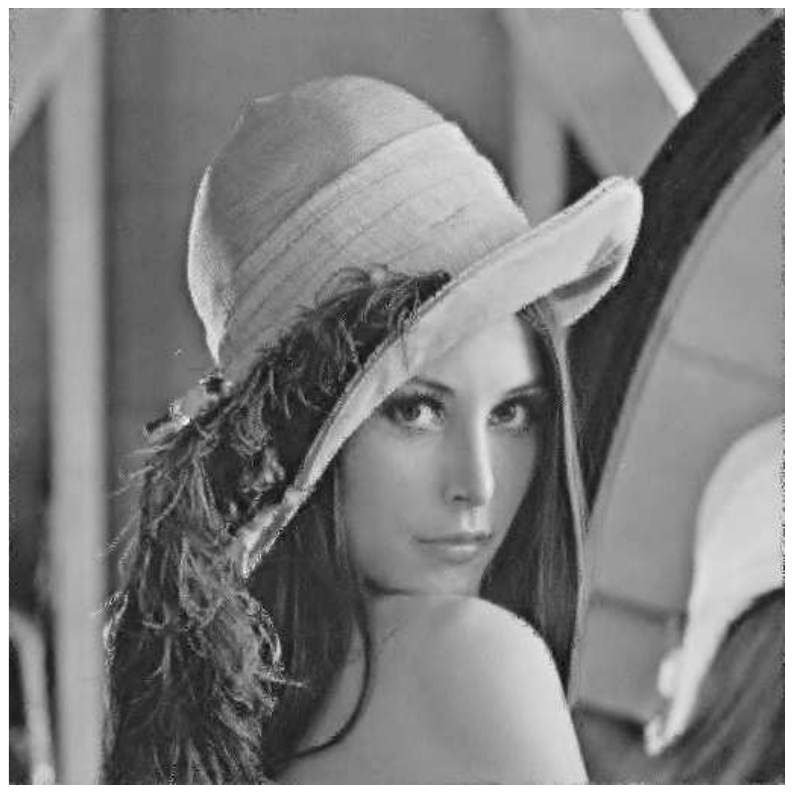}
\caption{}
\label{Reconstructed Image}
\end{subfigure}
\vspace{-1.7em}
\captionsetup{justification=centering}
\caption{\small (a) Image captured using PCI matrix with 50\% sampling ratio and with zeros filled at positions not sampled (b) Image reconstructed using `db4' wavelet as the sparsifying basis in \eqref{CS solution}.}
\vspace{-1.8em}
\end{figure}
\begin{figure*}[!ht]
\centering
\begin{subfigure}[b]{0.32\textwidth}
\centering
\includegraphics[scale=0.51]{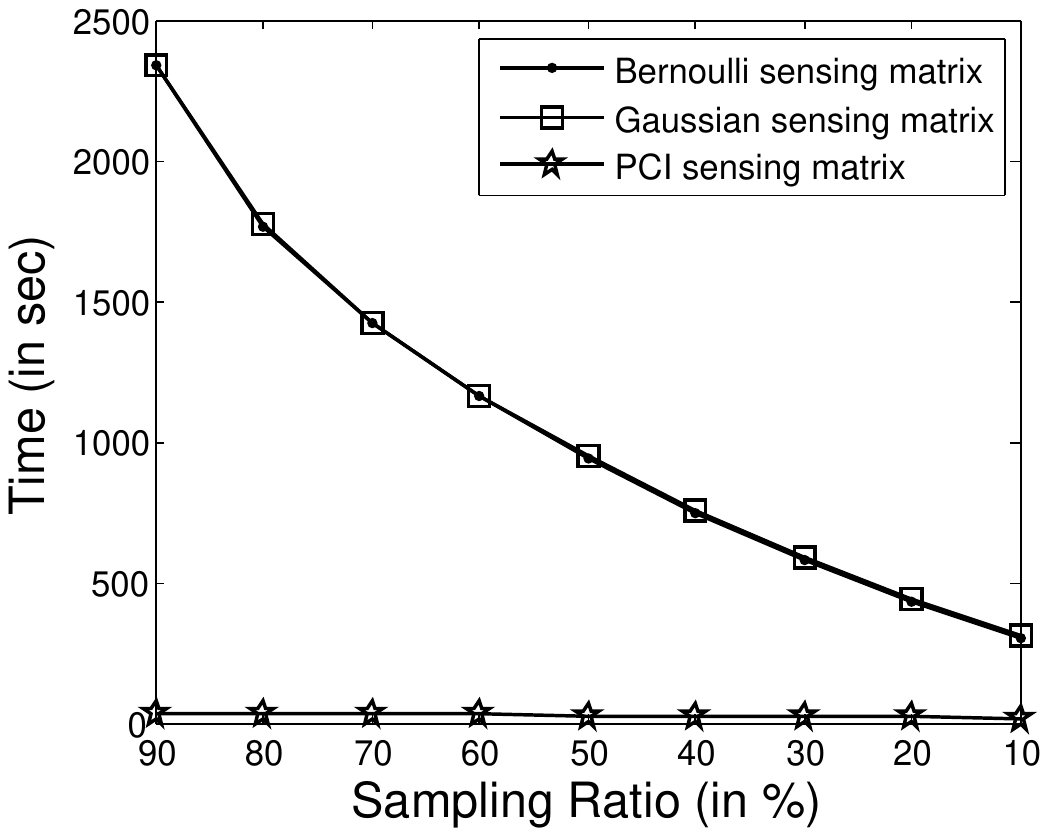}
\caption{}
\end{subfigure}
\begin{subfigure}[b]{0.32\textwidth}
\centering
\includegraphics[scale=0.51]{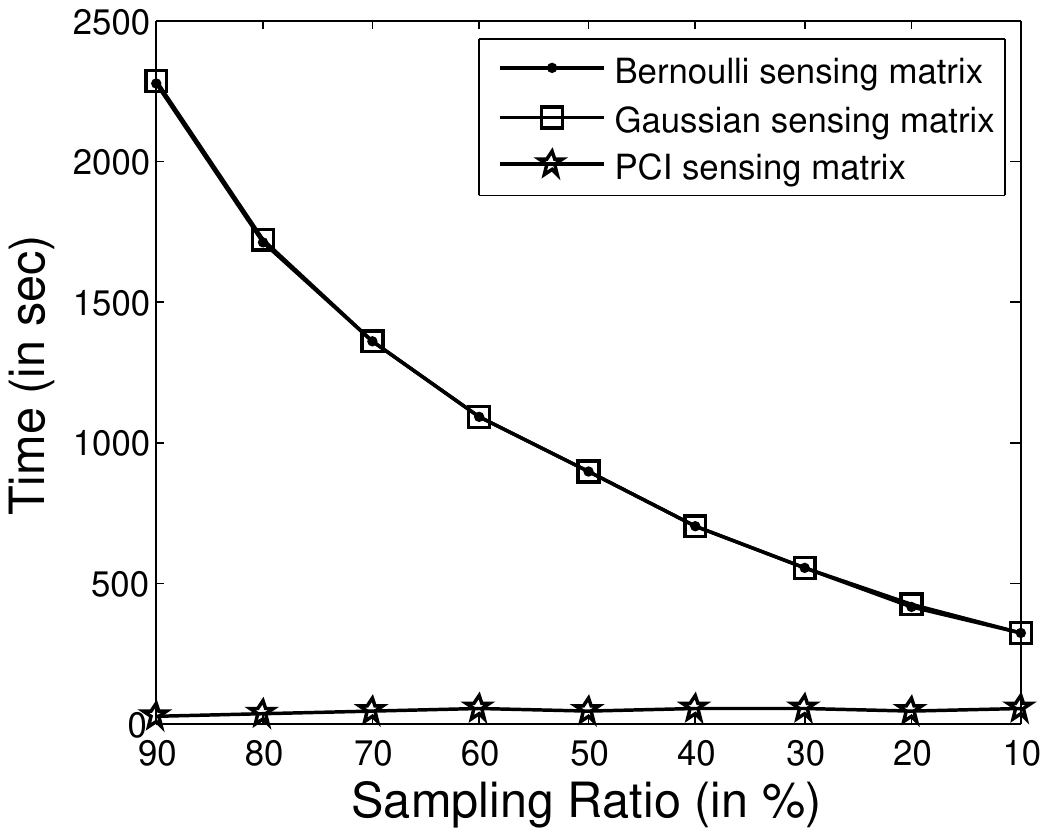}
\caption{}
\end{subfigure}
\begin{subfigure}[b]{0.32\textwidth}
\centering
\includegraphics[scale=0.51]{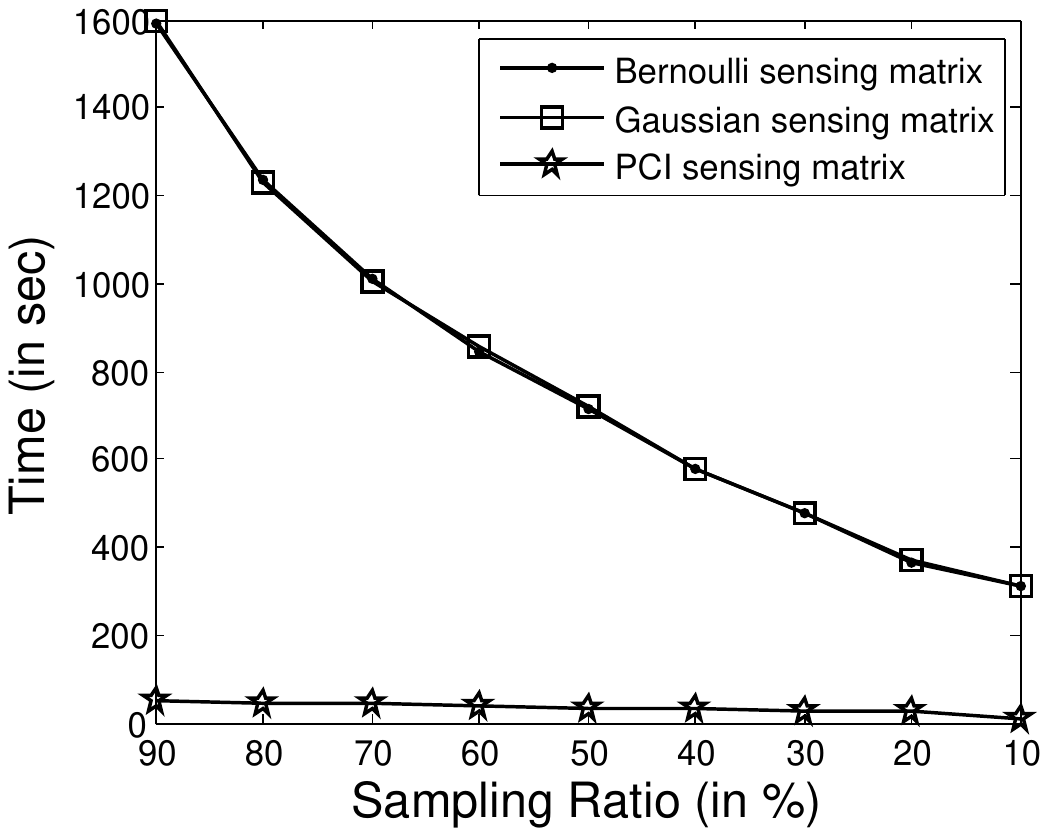}
\caption{}
\end{subfigure}
\vspace{-0.7em}
\captionsetup{justification=centering}
\caption{\small Time comparison with different measurement matrices. (a) Image `Beads' (b) Image `Lena' (c) Image `House'}
\vspace{-1.5em}
\label{TimeComparisonWithMeasurementMatrices}
\end{figure*}
\begin{figure*}[t]
\centering
\begin{subfigure}[b]{0.32\textwidth}
\centering
\includegraphics[scale=0.51]{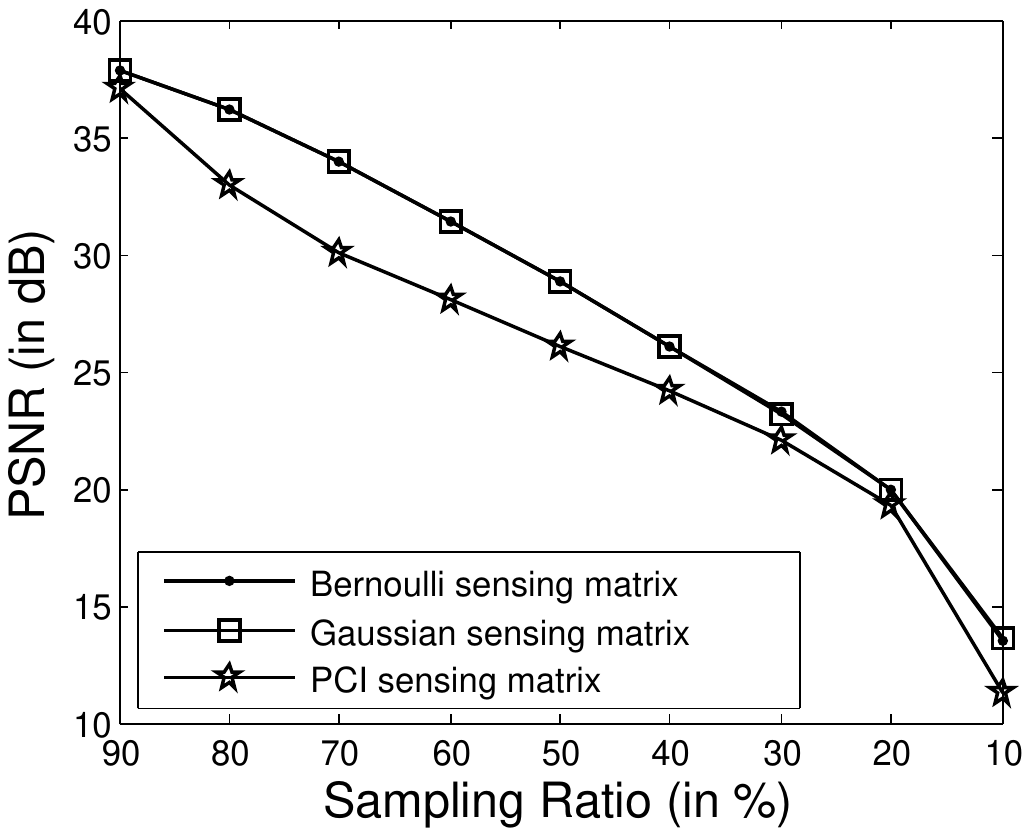}
\caption{}
\vspace{-0.3em}
\end{subfigure}
\begin{subfigure}[b]{0.32\textwidth}
\centering
\includegraphics[scale=0.51]{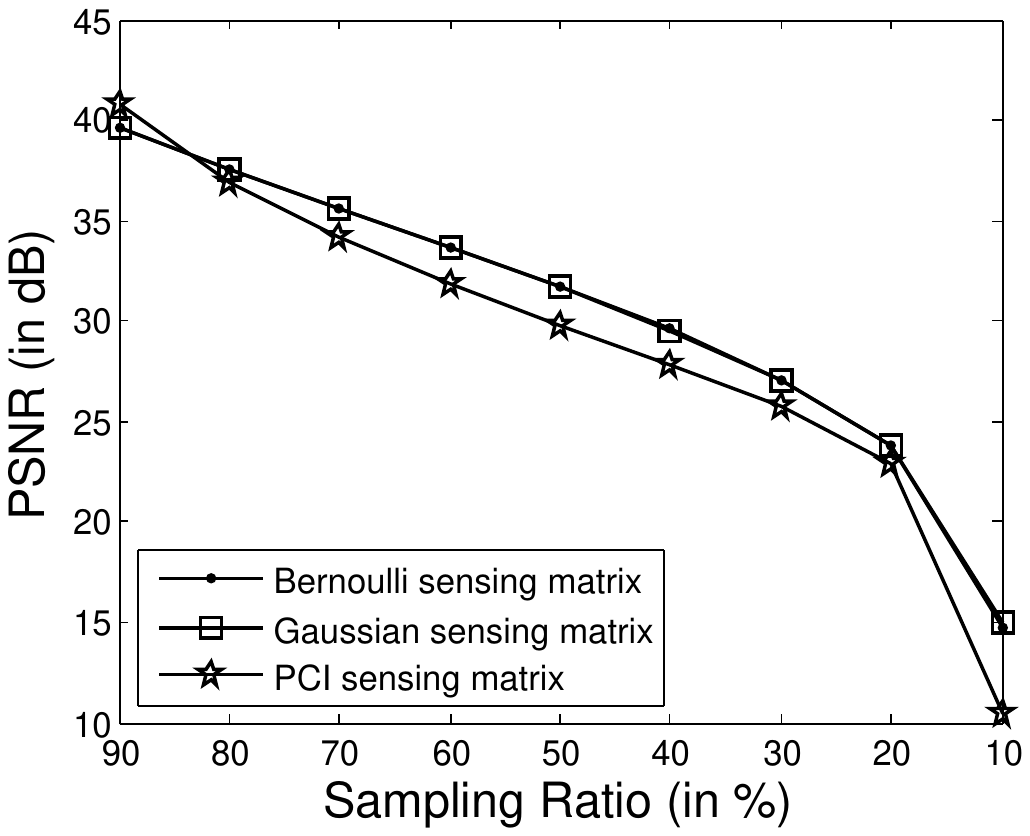}
\caption{}
\end{subfigure}
\begin{subfigure}[b]{0.32\textwidth}
\centering
\includegraphics[scale=0.51]{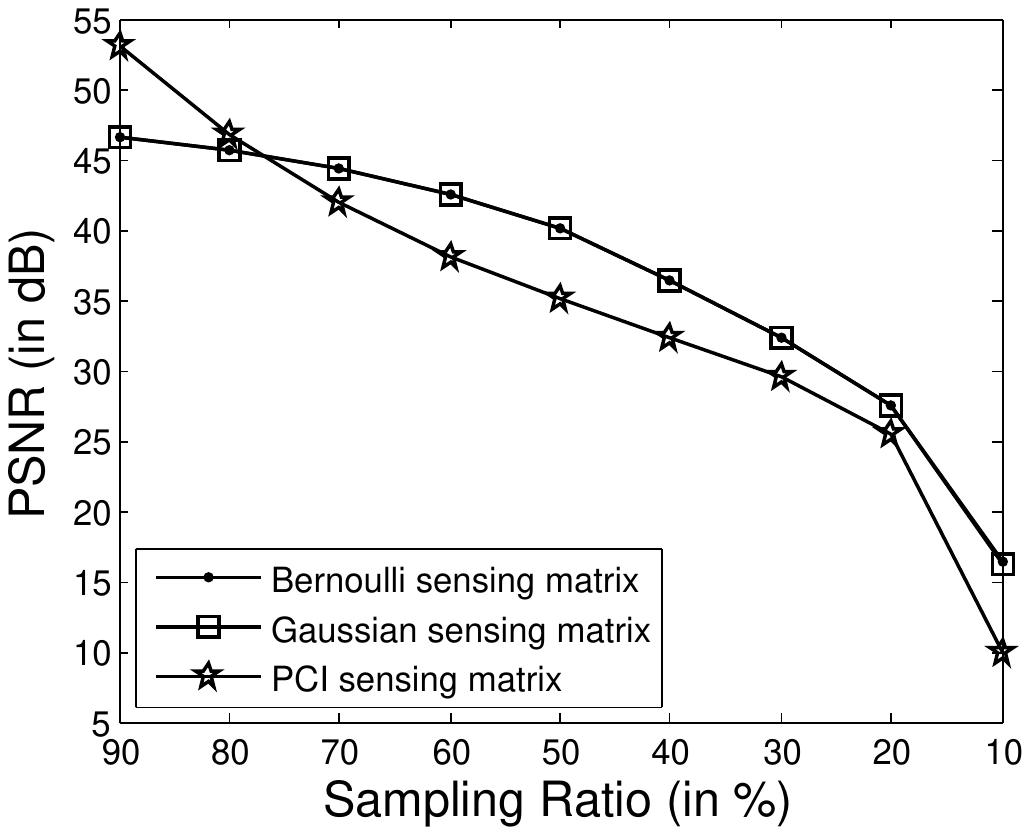}
\caption{}
\end{subfigure}
\vspace{-0.7em}
\captionsetup{justification=centering}
\caption{\small Reconstruction accuracy in terms of PSNR (in dB) with different measurement matrices. (a) Image `Beads' (b) Image `Lena' (c) Image `House'}
\label{PSNRComparisonWithMeasurementMatrices}
\vspace{-2em}
\end{figure*}

Fig. \ref{TimeComparisonWithMeasurementMatrices} compares the time taken in image reconstruction from the measured samples with sampling ratios varying from 10\% to 90\% using different measurement matrices. We compare the reconstruction time taken using the PCI matrix, random Gaussian matrix, and Bernoulli random matrix with $\pm 1$ as its entries. Random Gaussian matrix is preferred in wide range of applications because of the ease in theoretical analysis, while the Bernoulli matrix depicts physical implementation of single pixel camera. Image is reconstructed using equation \eqref{CS solution}, where we have used standard Daubechies orthogonal wavelet `db4' as the sparsifying basis. We have used MATLAB solver spgl1 \cite{BergFriedlander:2008}, \cite{spgl1:2007} to solve \eqref{CS solution} that implements Basis Pursuit (BP) \cite{chen2001atomic}. 

Compressed sensing based reconstruction with Gaussian and Bernoulli sensing matrices is implemented using block compressed sensing \cite{gan2007block}. This is to note that reconstruction with PCI matrix requires only the position information of the sampled pixels instead of the information of all entries of $M \times N$ sensing matrix that simplifies reconstruction with PCI matrix. We compare reconstruction results on three images: `Beads', `Lena', and `House' as shown in Fig. \ref{Beads}, \ref{Lena} and \ref{House}, respectively. Each image is of size $512 \times 512$. We have chosen these images because they exhibit different spectral properties. For example, `Beads' is rich in high frequencies, `Lena' contains both low and high frequency contents, while `House' is rich in low frequencies.  

From Fig. \ref{TimeComparisonWithMeasurementMatrices}, we note that the reconstruction time with Gaussian and Bernoulli sensing matrices is almost the same, whereas reconstruction time with the PCI matrix is extremely less. This huge reduction in time is owing to the implementation simplicity of PCI matrix. However, there is a trade-off between the reconstruction time and the accuracy. Fig. \ref{PSNRComparisonWithMeasurementMatrices} compares reconstruction accuracy of images in terms of peak signal-to-noise ratio (PSNR) given by:
\begin{equation}
\text{PSNR (in dB)}=10\log_{10} \left( \frac{\sum\limits_{i=0}^{m-1} \sum\limits_{j=0}^{n-1} |\textbf{I}(i,j)-\hat{\textbf{I}}(i,j)|}{\sum\limits_{i=0}^{m-1} \sum\limits_{j=0}^{n-1} |\textbf{I}(i,j)|^2} \right ),
\label{Equation for PSNR}
\end{equation}
where $\textbf{I}$ is the reference image, $\hat{\textbf{I}}$ is the reconstructed image, and $m \times n$ is the size of the image. Results shown are averaged over 10 iterations. From Fig. \ref{PSNRComparisonWithMeasurementMatrices}, we observe that reconstruction accuracy with PCI matrix is approx. 2 dB lower than that with Bernoulli or Gaussian sensing matrices. Similar to this observation, in \cite{he2010simplest}, authors have shown comparatively inferior CS-based reconstruction results with PCI matrix. In section-\ref{Section for Experimental Results}, we will show that this gap in reconstruction accuracy can be not only bridged, but rather enhanced to a large extent with matched wavelet. Since the use of PCI matrix expedites the complete pipeline, its use is overall useful, if a little bit inferior quality can be bridged.
\section{Proposed Wavelet Decomposition Method on Images}  
\label{Section For Proposed Strategy}
\vspace{-0.9em}
In this Section, we propose a new optimized strategy of multi-level wavelet decomposition on images. 

A separable wavelet transform is implemented on images by first applying 1-D wavelet transform along the columns and then along the rows of an image. This provides 1-level wavelet decomposition that consists of four components labeled as LL, LH, HL and HH, respectively. The same procedure is repeated on the LL part of the wavelet transform $k$-times to obtain $k$-level decomposition of an image (Fig. \ref{Fig for existing wavelet transform}). We call this decomposition as regular pyramid (R-Pyramid) wavelet decomposition.

In general, $k$-level wavelet decomposition of an image consists of following components: LL\textsub{\textit{k}}, LH\textsub{\textit{i}}, HL\textsub{\textit{i}} and HH\textsub{\textit{i}}, where $i=1,2,...,k-1$. LH\textsub{i}, HL\textsub{i} and HH\textsub{i} components are obtained by applying wavelet transform on the columns and rows of LL\textsub{\textit{i}-1} component. LH\textsub{\textit{i}} is obtained by filtering LL\textsub{\textit{i}-1} column-wise using a lowpass filter and filtering it row-wise using a highpass filter. Thus, the current nomenclature of labeling subbands is: first character represents operation on columns and second character represents operation on rows, where operation implies highpass or lowpass filtering denoted by symbols `H' and `L', respectively. 
\begin{figure*}[!ht]
\vspace{-0.4em}
\centering
\begin{subfigure}[b]{0.24\textwidth}
\centering
\includegraphics[scale=0.45]{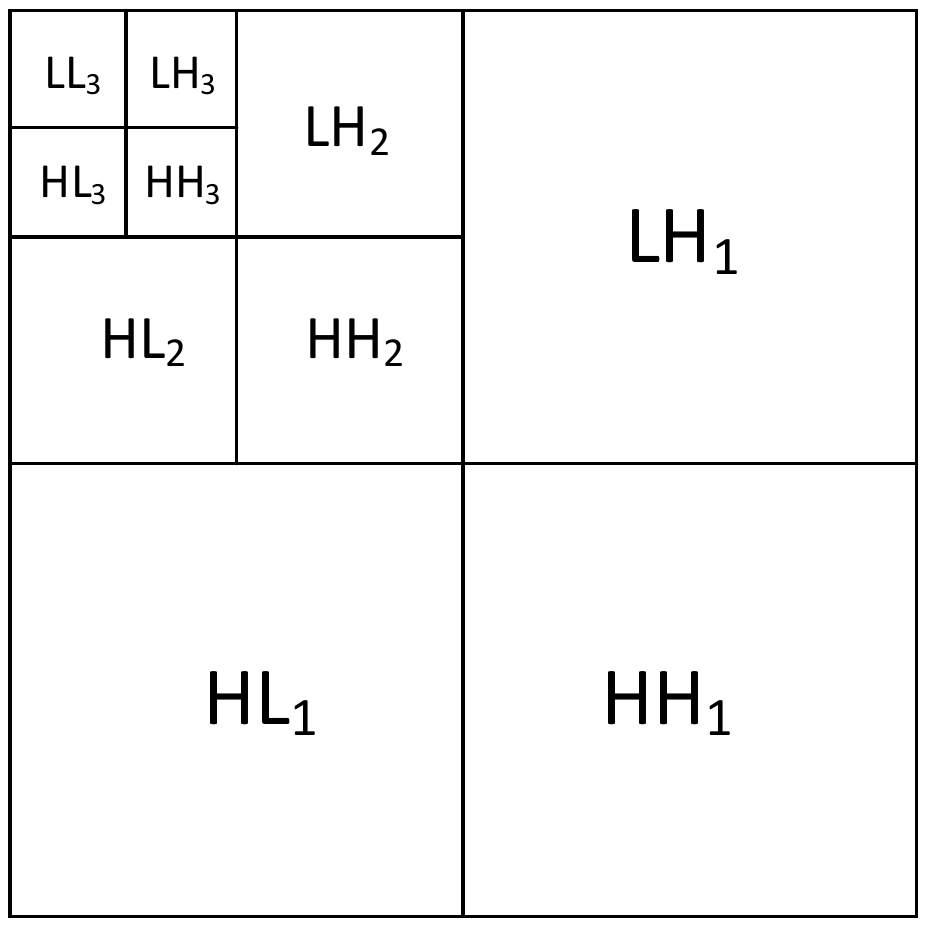}
\captionsetup{justification=centering}
\caption{3-level R-Pyramid wavelet decomposition}
\label{Fig for existing wavelet transform}
\end{subfigure}
%\subfloat[]{
\begin{subfigure}[b]{0.24\textwidth}
\centering
\includegraphics[scale=0.45]{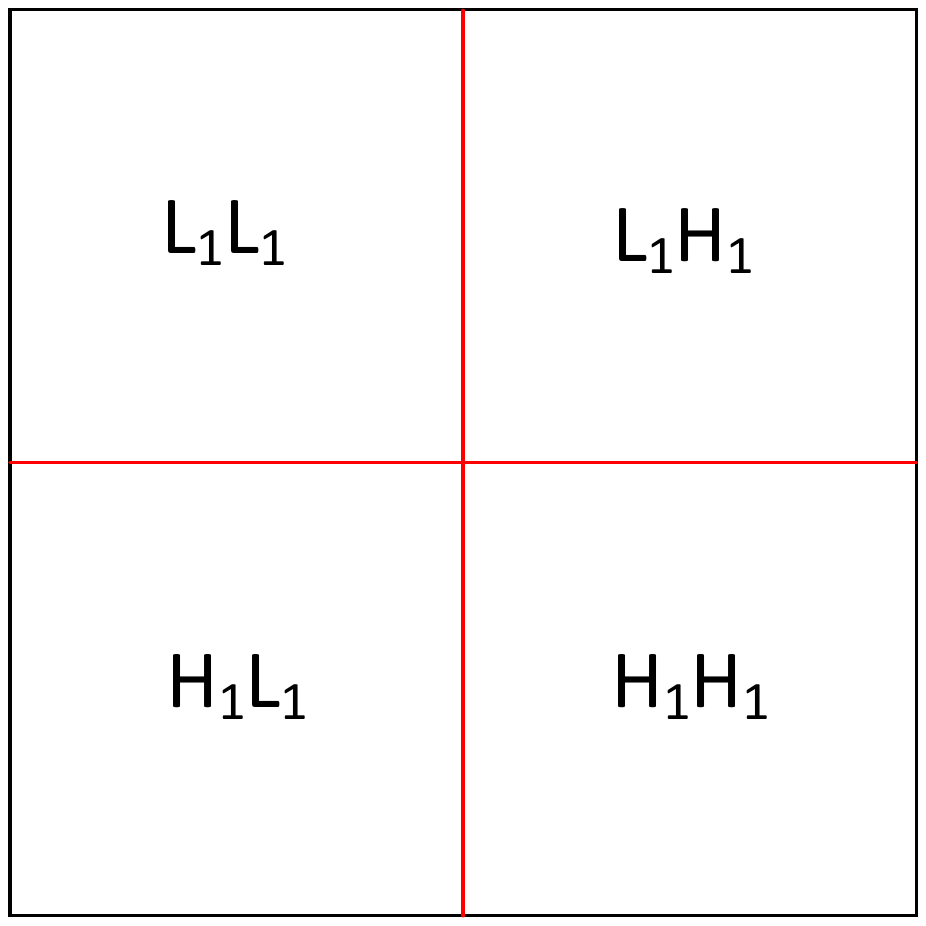}
\captionsetup{justification=centering}
\caption{1-level L-Pyramid wavelet decomposition}
\label{Fig for proposed level-1 wavelet transform}
\end{subfigure}
\begin{subfigure}[b]{0.24\textwidth}
\centering
\includegraphics[scale=0.45]{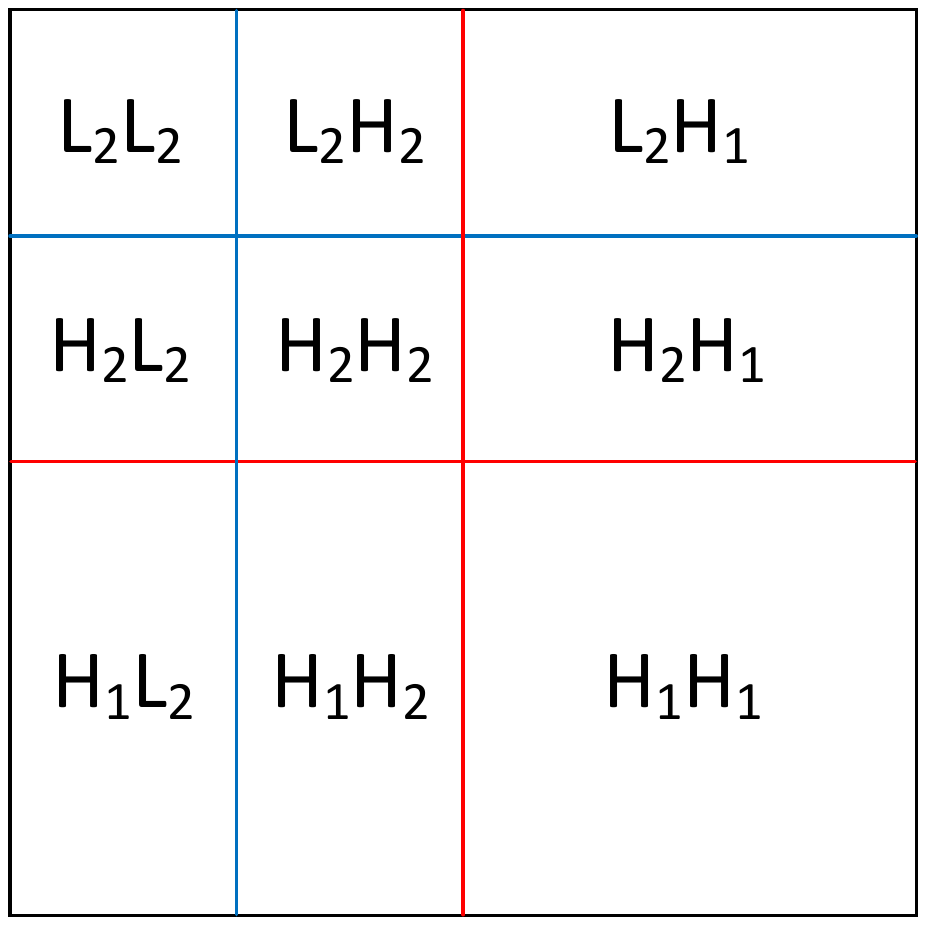}
\captionsetup{justification=centering}
\caption{2-level L-Pyramid wavelet decomposition}
\label{Fig for proposed level-2 wavelet transform}
\end{subfigure}
\begin{subfigure}[b]{0.24\textwidth}
\centering
\includegraphics[scale=0.45]{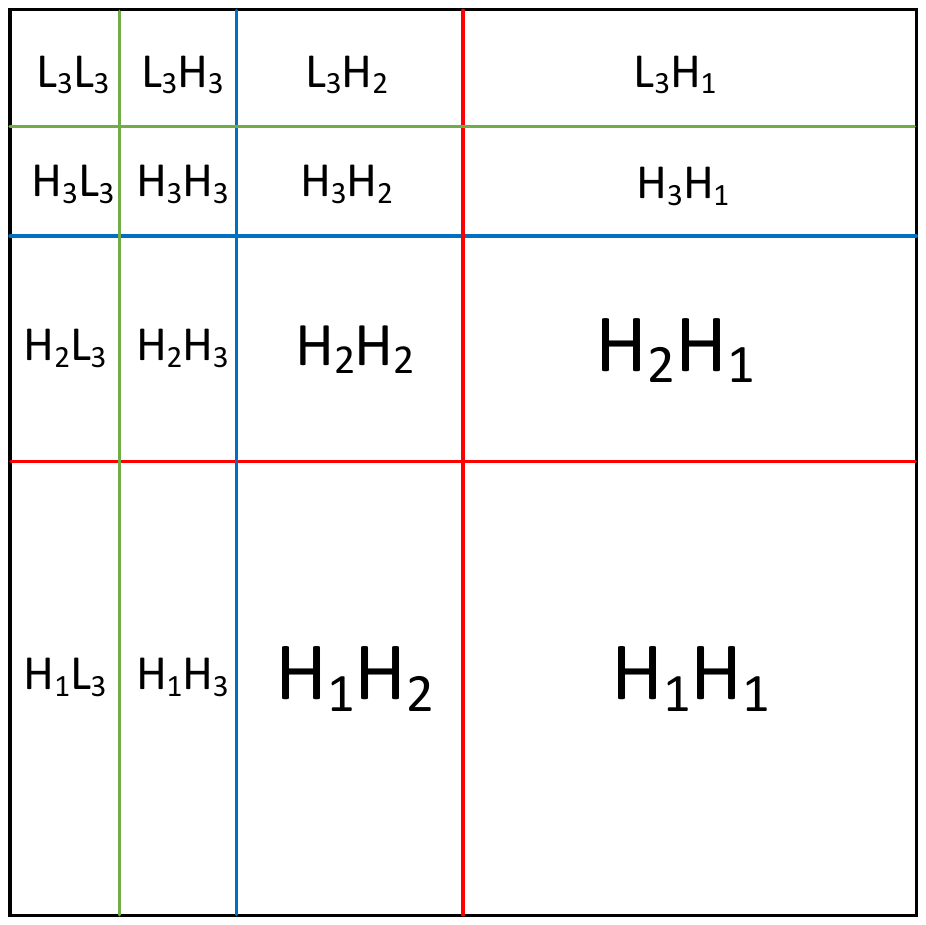}
\captionsetup{justification=centering}
\caption{3-level L-Pyramid wavelet decomposition}
\label{Fig for proposed level-3 wavelet transform}
\end{subfigure}
\vspace{-0.5em}
\captionsetup{justification=centering}
\caption{\small Multi-level wavelet decomposition of image}
\label{Fig forWavelet transform}
\vspace{-1.2em}
\end{figure*}
\begin{figure*}[!ht]
\centering
\begin{subfigure}[b]{0.32\textwidth}
\centering
\includegraphics[scale=0.52]{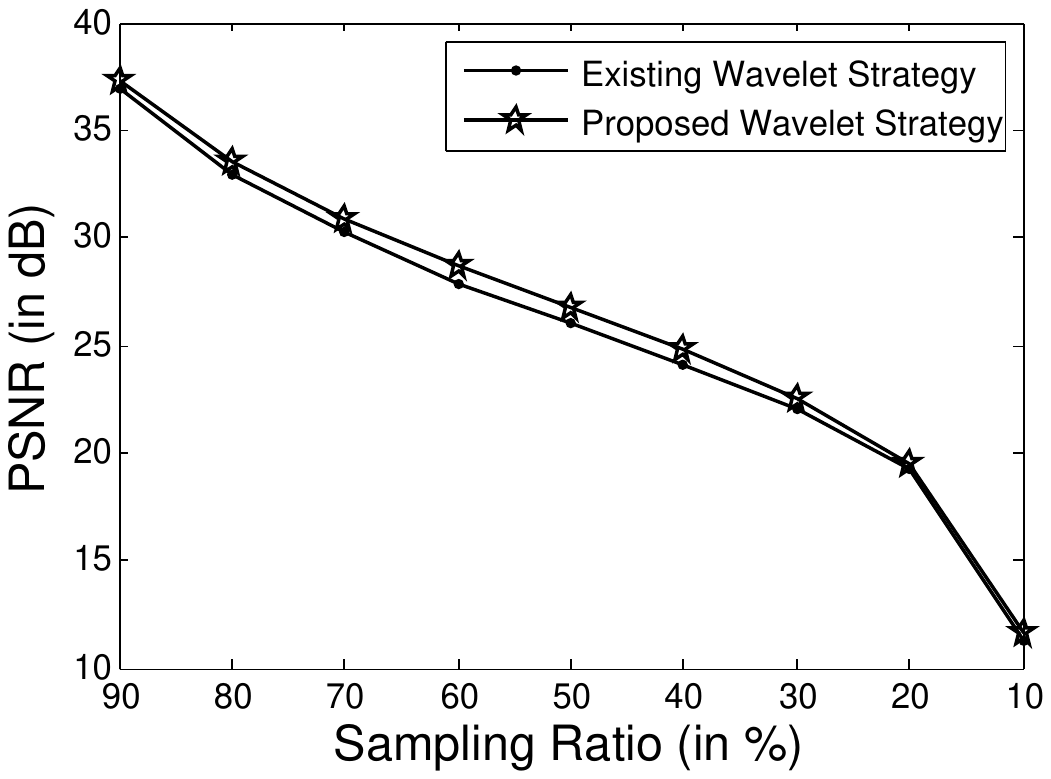}
\caption{}
\end{subfigure}
\begin{subfigure}[b]{0.32\textwidth}
\centering
\includegraphics[scale=0.52]{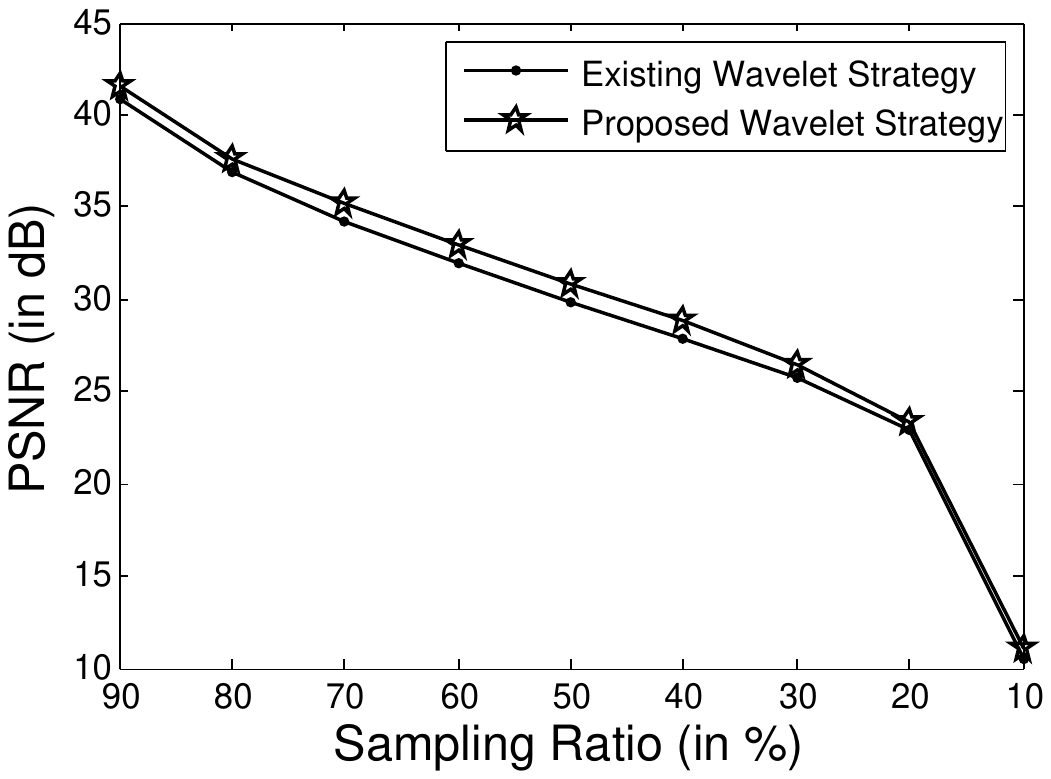}
\caption{}
\end{subfigure}
\begin{subfigure}[b]{0.32\textwidth}
\centering
\includegraphics[scale=0.52]{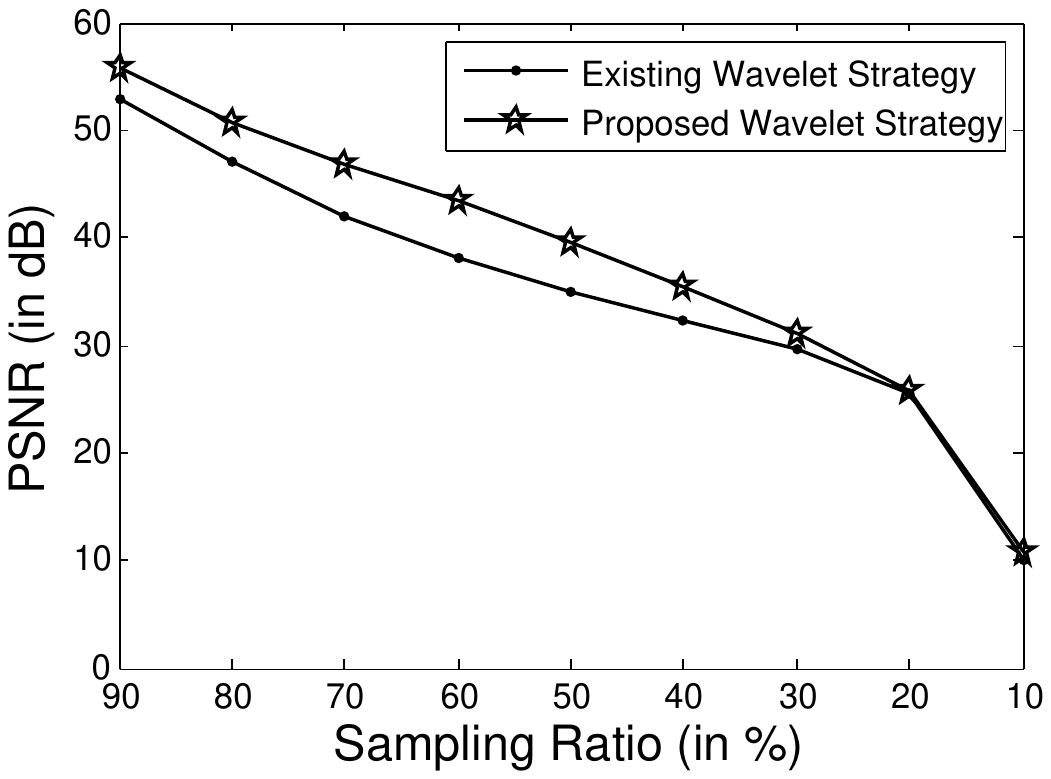}
\caption{}
\end{subfigure}
\vspace{-0.8em}
\captionsetup{justification=centering}
\caption{\small CS-based reconstruction accuracy with the existing and the proposed wavelet decomposition strategy with `db4' wavelet on image (a) `Beads' (b) `Lena'  (c) `House'} 
\vspace{-1.8em}
\label{Fig for strategy comparison}
\end{figure*}
In the conventional 2-D wavelet transform (Fig. \ref{Fig for existing wavelet transform}), wavelet decomposition is applied on LL\textsub{\textit{i}} part only to obtain the $(i+1)^{th}$ level coefficients. Since it is a separable transform, similar to 1-D wavelet transform wherein wavelet is applied repeatedly on lowpass filtered branches, we propose to apply wavelet in the lowpass filtered directions of LH\textsub{\textit{i}-1} and HL\textsub{\textit{i}-1} subbands in contrast to the conventional decomposition strategy wherein these subbands are left unaltered. Thus, the proposed 2nd level wavelet decomposition is as shown in Fig. \ref{Fig for proposed level-2 wavelet transform}.

Since we apply wavelet in only one direction of LH\textsub{\textit{i}-1} and HL\textsub{\textit{i}-1} subbands, we notate these subbands differently compared to the conventional scheme. We assign subscript with both `L' and `H' symbols of every subband to denote the no. of times wavelet has been applied in that direction. In order to understand this, let us first consider 1-level wavelet decomposition as shown in Fig. \ref{Fig for proposed level-1 wavelet transform} that is similar to the conventional scheme shown in Fig. \ref{Fig for existing wavelet transform}. However, the subbands are labeled as L\textsub{1}L\textsub{1}, L\textsub{1}H\textsub{1}, H\textsub{1}L\textsub{1}, and H\textsub{1}H\textsub{1}.  

In the 2nd level wavelet decomposition, wavelet is applied in both directions of L\textsub{1}L\textsub{1} subbands leading to L\textsub{2}L\textsub{2}, L\textsub{2}H\textsub{2}, H\textsub{2}L\textsub{2}, and H\textsub{2}H\textsub{2} subbands. But in addition, wavelet is applied on the columns of L\textsub{1}H\textsub{1} yielding two subbands L\textsub{2}H\textsub{1} and H\textsub{2}H\textsub{1}. Also, wavelet is applied on the rows of H\textsub{1}L\textsub{1} subband yielding two subbands H\textsub{1}L\textsub{2} and H\textsub{1}H\textsub{2}. Applying similar strategy for the 3rd level decomposition, we obtain subbands as shown in Fig. \ref{Fig for proposed level-3 wavelet transform}. We name this wavelet decomposition as L-shaped pyramid (L-Pyramid) wavelet decomposition.

The efficacy of the proposed wavelet strategy is shown in CS-based image reconstruction using orthogonal Daubechies wavelet `db4'. Fig. \ref{Fig for strategy comparison} shows reconstruction accuracy in terms of PSNR \eqref{Equation for PSNR} with sampling ratios ranging from 10\% to 90\% averaged over 10 iterations. We compare reconstruction accuracy with the existing wavelet decomposition strategy and with the proposed strategy on the same three images: `Beads', `Lena' and `House'. From Fig. \ref{Fig for strategy comparison}, we note marked improvement with the proposed strategy over the existing strategy. Performance is particularly improved for image `House' that is rich in low frequencies.

\section{Proposed Method of Designing Matched Wavelet from Compressively Sensed Images}
\label{Section For Proposed Method of Matched Wavelet}
\vspace{-0.6em}
As discussed earlier, wavelets are extensively used as sparsifying transform for image reconstruction in CS domain. Researchers often face challenge in choosing a wavelet because it is not known apriori as to which wavelet will provide better representation of the signal. In general, one uses compactly supported wavelets, either orthogonal Daubechies wavelets or biorthogonal 9/7 or 5/3 wavelets (note that first digit denotes the length of analysis lowpass filter and second digit denotes the length of analysis highpass filter). It is natural to think that a wavelet matched to a given signal will provide best representation of the given signal and hence, will provide better reconstruction in CS compared to any wavelet chosen arbitrarily. Although a number of methods exist for designing wavelet matched to a given signal \cite{gupta2005new1,gupta2005new2,ansari2015signal,chapa2000algorithms, claypoole1998adaptive,piella2005gradient,heijmans2002building}, these methods work when full signal is available. On the other hand, in CS, only compressively sensed signal is available at the receiver. No method exists for estimation of matched wavelet from compressively sensed signal that can be utilized at the same time for efficient reconstruction of signal/image from compressively sensed signal/image. In this Section, we address this problem. We propose a joint framework for signal reconstruction in CS wherein we are estimating wavelet from the compressivley sensed image and use it for efficient image reconstruction at the same time.

Since the proposed work is on separable wavelets, we require to estimate matched wavelet for row and column directions separately. Thus, before proceeding with the matched wavelet design, we present the scanning mechanism of rows and columns data in images as used in this work.
\subsection{Scanning Mechanism for Row- and Column-wise Data}
\label{For application Of Matched Wavelet to Images}
As stated earlier, we require to estimate matched wavelet for both the row and column directions. One easier strategy can be designing matched wavelet on row- or column-vectorized image and use the same wavelet, later, along both the columns and rows as a separable wavelet. Instead, we propose to design matched wavelet separately for the row- and column-directions. These directions can be scanned using the following two strategies:
\begin{figure}[!ht]
\centering
\begin{subfigure}[b]{0.23\textwidth}
\vspace{-0.6em}
\centering
\includegraphics[scale=0.4]{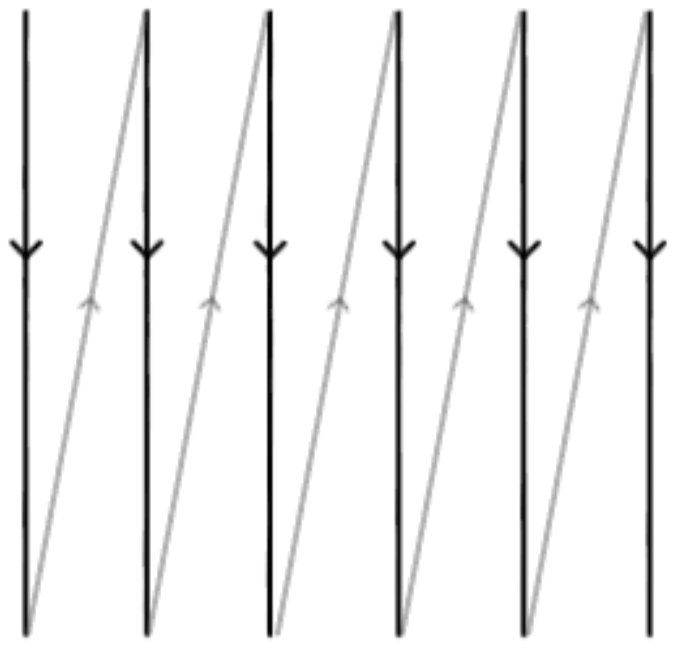}
\caption{Column wise scanning}
\label{Fig for Raster scan way1 column wise}
\end{subfigure}
\begin{subfigure}[b]{0.23\textwidth}
\centering
\includegraphics[scale=0.4]{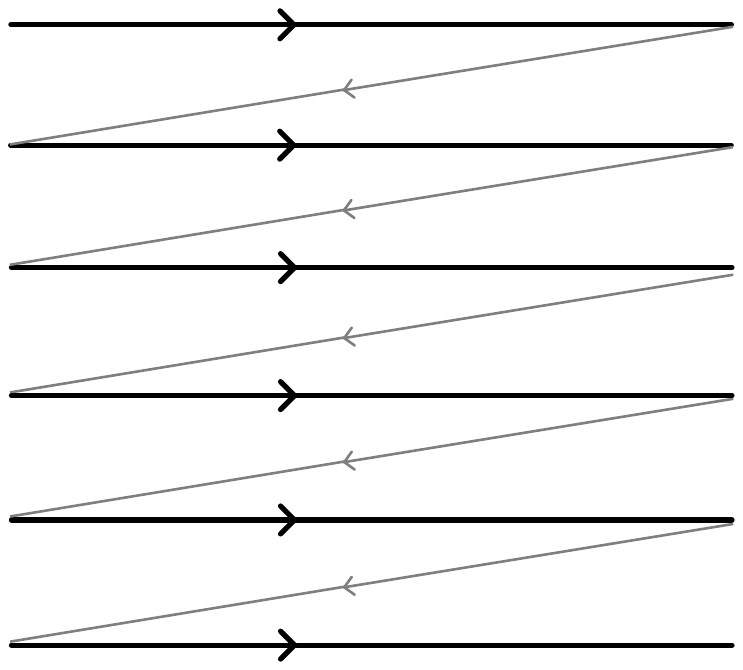}
\caption{Row wise scanning}
\label{Fig for Raster scan way1 row wise}
\end{subfigure}
\captionsetup{justification=centering}
\vspace{-0.4em}
\caption{\small Scanning Strategy-1}
\label{Fig for Raster scan way1}
\vspace{-1.4em}
\end{figure}
\begin{figure}[!ht]
\centering
\begin{subfigure}[b]{0.23\textwidth}
\centering
\includegraphics[scale=0.46]{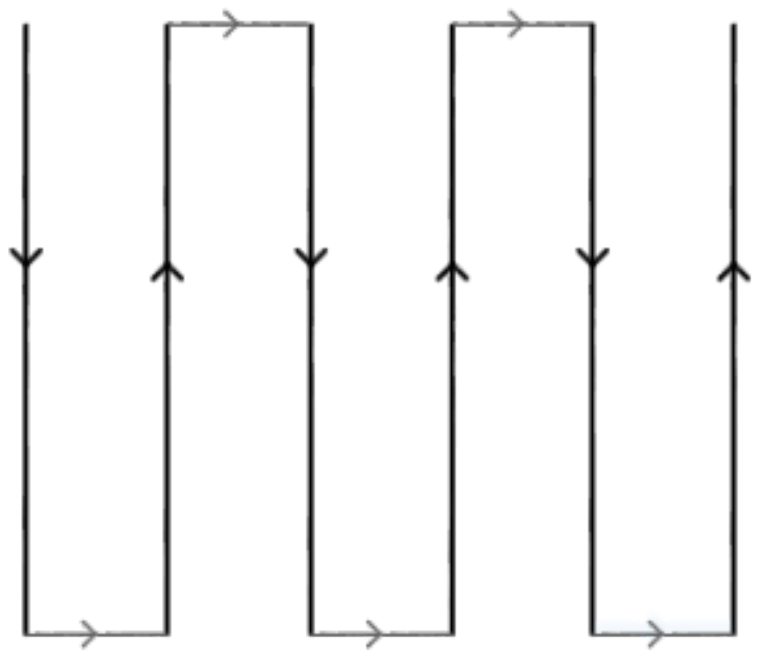}
\caption{Column wise scanning}
\label{Fig for Raster scan way2 column wise}
\end{subfigure}
\begin{subfigure}[b]{0.23\textwidth}
\centering
\includegraphics[scale=0.46]{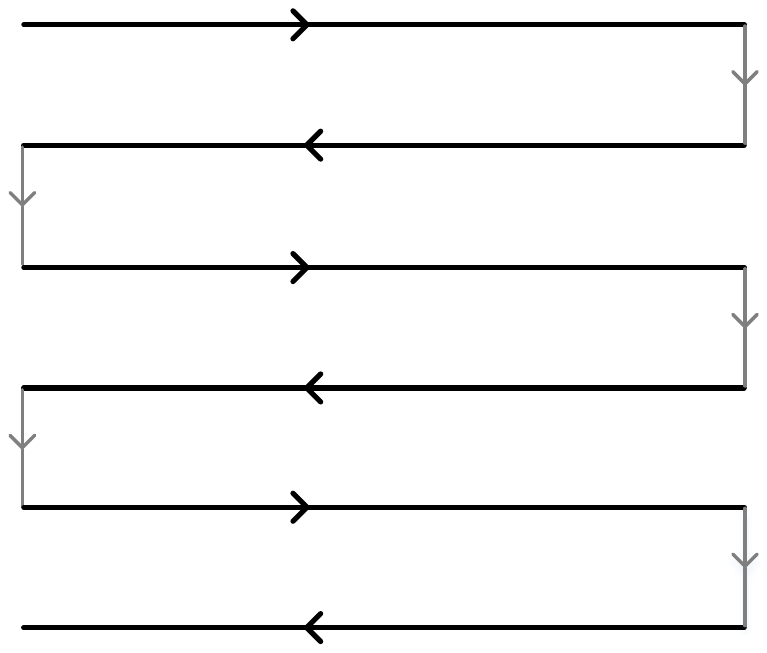}
\caption{Row wise scanning}
\label{Fig for Raster scan way2 row wise}
\end{subfigure}
\captionsetup{justification=centering}
\vspace{-0.4em}
\caption{\small Scanning Strategy-2}
\label{Fig for Raster scan way2}
\vspace{-1em}
\end{figure}
\begin{itemize}
\item{Strategy-1}: 
We scan the image according to the scanning scheme as shown in Fig. \ref{Fig for Raster scan way1}, wherein rows or columns are stacked one after the other to obtain 1-D signal for both the directions. However, this will cause discontinuity in the 1-D signal at the transitions when one column ends and another starts and likewise, for the rows.  

\item{Strategy-2}: In order to avoid this discontinuity, other alternate strategy is to scan all even rows or columns in the reverse direction as shown in Fig. \ref{Fig for Raster scan way2}.
\end{itemize}

Since strategy-2 is robust to sudden transitions at the row- or column-endings, we use it in all our experiments. 
\vspace{-1.7em}
\subsection{Proposed Methodology of Matched Wavelet Design}
\vspace{-0.6em}
With the scanning strategy-2 proposed above, we convert a given image into two 1-D signals: one with column-wise scanning and another with row-wise scanning. Henceforth, in the present Section, we present matched wavelet design methodology using compressively sensed 1-D signal. We will use these designs in the next Section to note the performance over images as separable wavelet transforms. 

The proposed methodology has three stage. In stage-1, we obtain coarse image estimate from compressively (partially) sensed samples using a standard wavelet. We call this a coarser estimate because the wavelet used is not matched to the given signal and hence, the original signal may not be that sparse over this wavelet compared to that with the matched wavelet. This will impact the reconstruction performance. In stage-2, we estimate matched analysis wavelet filter that provides sparser subband wavelet (detail) coefficients than those obtained from the standard wavelet in stage-1. Using these estimated filters and the coarser signal estimate of stage-1, we design all filters of the matched wavelet system. In stage-3, we reconstruct signal from measured sub-samples using the matched wavelet estimated in stage-2.

\subsubsection{Stage-1: Coarser Image Estimation}
\label{Section For Step-I}
In this stage, we reconstruct a coarser estimate of the signal from compressively (partially) sensed measurement data $\textbf{y}$ using Basis pursuit (BP) optimization method as below:
\begin{equation}
\tilde{\mathbf{s}} = \min_{\mathbf{s}} ||\mathbf{s}||_1 \text{     subject to:     } \mathbf{y}=\mathbf{\Phi \Psi} \mathbf{s}.
\label{stage-1}
\end{equation}
where $\mathbf{\Psi}$ corresponds to any standard wavelet. We use biorthogonal 5/3 wavelet. The coarser approximation of the signal is obtained as $\tilde{\mathbf{x}}=\mathbf{\Psi} \mathbf{\tilde{s}}$. We solve the above optimization problem using MATLAB solver spgl1 \cite{BergFriedlander:2008,spgl1:2007}. 
\subsubsection{Stage-2: Estimation of matched wavelet}
\label{Section for Step-II}
We use the signal reconstructed in the previous step to design matched wavelet system by designing Predict and update stage filters of the lifting structure. 

\textbf{(a) Predict Stage: }
For designing signal-matched analysis wavelet filter in the lifting framework, first, we consider Lazy wavelet with $H_0(z)=1$, $H_1(z)=z$, $F_0(z)=1$, and $F_1(z)=z^{-1}$, in Fig. \ref{WaveletFilterbank}. We start with this Lazy wavelet structure and proceed with designing the predict stage filter $T(z)$. 

We apply coarser version of original signal $\tilde{\textbf{x}}$ estimated in Stage-1 as input to the Lazy wavelet filterbank and obtain even and odd sampled streams $\tilde{\textbf{x}}_e(n)$ and $\tilde{\textbf{x}}_o(n)$, respectively as shown in Fig. 1(a). We pass even indexed samples $\tilde{\textbf{x}}_e(n)$ through the predict stage filter and write the output of the lower subband signal, shown in Fig. 1(a), as below:
\begin{align}
\tilde{d}_{-1}[n]&=\tilde{x}_o[n]-\tilde{x}_e[n] \ast t[n] \nonumber \\
&=\tilde{x}_o[n]-p[n] \nonumber \\
&=\tilde{x}[2n+1]-p[n]
\label{eq:no16}
\end{align}
where
\begin{equation}
p[n]=\sum_{k=0}^{L_t-1}t[k]\tilde{x}_e[n-k],
\label{Equation for Prediction output}
\end{equation}    
where `$\ast$' is the convolution operator, $L_t$ is the length of the predict stage filter $t[n]$ with its \textit{Z}-transform given by $T(z)=Z\{t[n]\}$. For good prediction, a sample (here odd indexed) should be predicted from its immediate past and immediate future neighbors that requires a careful choice on the predict stage filter provided by Theorem-1 as below.
\vspace{-0.5em}
\begin{theorem}
The following structure of predict stage filter allows odd-indexed samples to be predicted from their nearest even-indexed samples, i.e., from their immediate past and immediate future samples:
\begin{equation}
T(z)=z^{-(\frac{L_t}{2}-1)} \sum_{i=0}^{L_t-1} t[i]z^i,
\label{PredictStageFilterStructure}
\end{equation}
where $T(z)$ is an even-length filter. Even length $T(z)$ will ensure that equal number of past and future samples are used in prediction.
\end{theorem}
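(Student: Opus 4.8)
The plan is to work entirely in the polyphase (downsampled) domain, where the predict filter acts on the even stream $\tilde{x}_e[n]=\tilde{x}[2n]$ to produce the prediction $p[n]$ of the odd stream $\tilde{x}_o[n]=\tilde{x}[2n+1]$, and simply to read off which original samples $p[n]$ combines. First I would take the inverse $Z$-transform of the proposed filter. Writing $T(z)=z^{-(\frac{L_t}{2}-1)}\sum_{i=0}^{L_t-1}t[i]z^i=\sum_{i=0}^{L_t-1}t[i]z^{i-(\frac{L_t}{2}-1)}$ and matching against $T(z)=\sum_m \tau[m]z^{-m}$ gives an effective tap at delay $m=(\frac{L_t}{2}-1)-i$. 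As $i$ runs from $0$ to $L_t-1$, the delay $m$ sweeps the contiguous integer range from $-\frac{L_t}{2}$ up to $\frac{L_t}{2}-1$, so the centering factor $z^{-(\frac{L_t}{2}-1)}$ has turned the nominally causal length-$L_t$ filter of \eqref{Equation for Prediction output} into a non-causal filter straddling the origin.

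Next I would substitute this support into the convolution defining the prediction. Since $p[n]=\sum_m \tau[m]\,\tilde{x}_e[n-m]=\sum_m \tau[m]\,\tilde{x}[2n-2m]$, each tap index $m$ contributes the even-indexed sample at position $2n-2m$. The predicted odd sample sits at position $2n+1$; its immediate past even neighbor is $\tilde{x}[2n]$, obtained at $m=0$, and its immediate future even neighbor is $\tilde{x}[2n+2]$, obtained at $m=-1$. Because the support computed above contains both $m=0$ and $m=-1$, the prediction uses the two nearest even samples, which is the first assertion of the theorem.

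Finally I would count past against future. Relative to the predicted location $2n+1$, taps with $m\ge 0$ draw on even samples at positions $\le 2n$ (past), while taps with $m\le -1$ draw on even samples at positions $\ge 2n+2$ (future). The support $\{-\frac{L_t}{2},\dots,-1,0,\dots,\frac{L_t}{2}-1\}$ splits into exactly $\frac{L_t}{2}$ indices with $m\ge 0$ and exactly $\frac{L_t}{2}$ indices with $m\le -1$, giving equal numbers of past and future samples. This is precisely where the even-length hypothesis on $T(z)$ enters: an odd $L_t$ would make $\frac{L_t}{2}$ non-integral and break the symmetric split. Assembling the three observations yields the claim.

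The computation is elementary, so the only genuine obstacle is bookkeeping: one must hold the $Z$-transform sign convention ($z^{-1}\leftrightarrow$ one-sample delay), the factor-of-two stretch induced by polyphase downsampling, and the offset of $+1$ between the predicted odd position $2n+1$ and its even reference grid all mutually consistent, so that \emph{past} and \emph{future} are measured about $2n+1$ and not about $2n$. I would therefore fix these conventions explicitly at the outset and verify the two endpoint taps ($i=0$ and $i=L_t-1$) by hand as a sanity check before carrying out the count.
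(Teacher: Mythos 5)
Your proposal is correct and follows essentially the same route as the paper: substituting the shifted filter structure into the prediction convolution, identifying the support of the resulting non-causal taps, and observing that the even samples used straddle the odd sample at $2n+1$ symmetrically (the paper writes this as the explicit expansion $p[n]=t[0]\tilde{x}[2n-L_t+2]+\dots+t[L_t-1]\tilde{x}[2n+L_t]$). Your explicit count of $\tfrac{L_t}{2}$ past versus $\tfrac{L_t}{2}$ future taps is a slightly more careful articulation of the "equal number" claim, which the paper leaves to inspection of the expansion and its accompanying figure.
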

\vspace{-0.5em}
\begin{proof}
On using \eqref{PredictStageFilterStructure} in \eqref{Equation for Prediction output}, we obtain:
\begin{equation}
\label{eq19}
p[n]=\sum_{k=0}^{L-1} t[k]\tilde{x}[2n-L_t+2+2k].
\end{equation}
\begin{figure*}[!ht]
\vspace{-1.4em}
\centering
\includegraphics[scale=0.9]{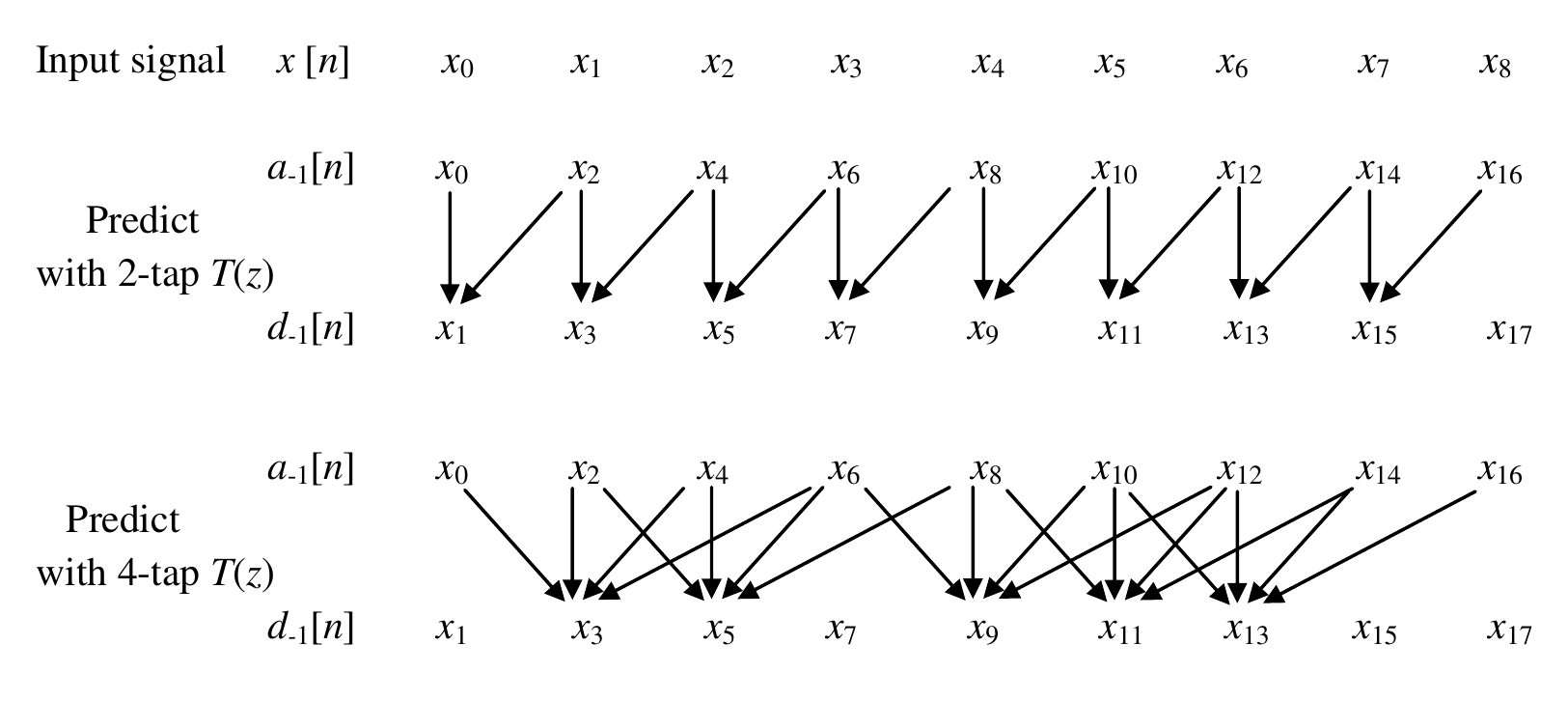}
\vspace{-1.4em}
\caption{An odd sample is being predicted from its neighboring even samples with a 2-tap and a 4-tap filter $T(z)$}
\label{Fig for predict diagram}
\vspace{-1em}
\end{figure*}
\begin{figure*}[!ht]
\centering
\includegraphics[scale=0.9]{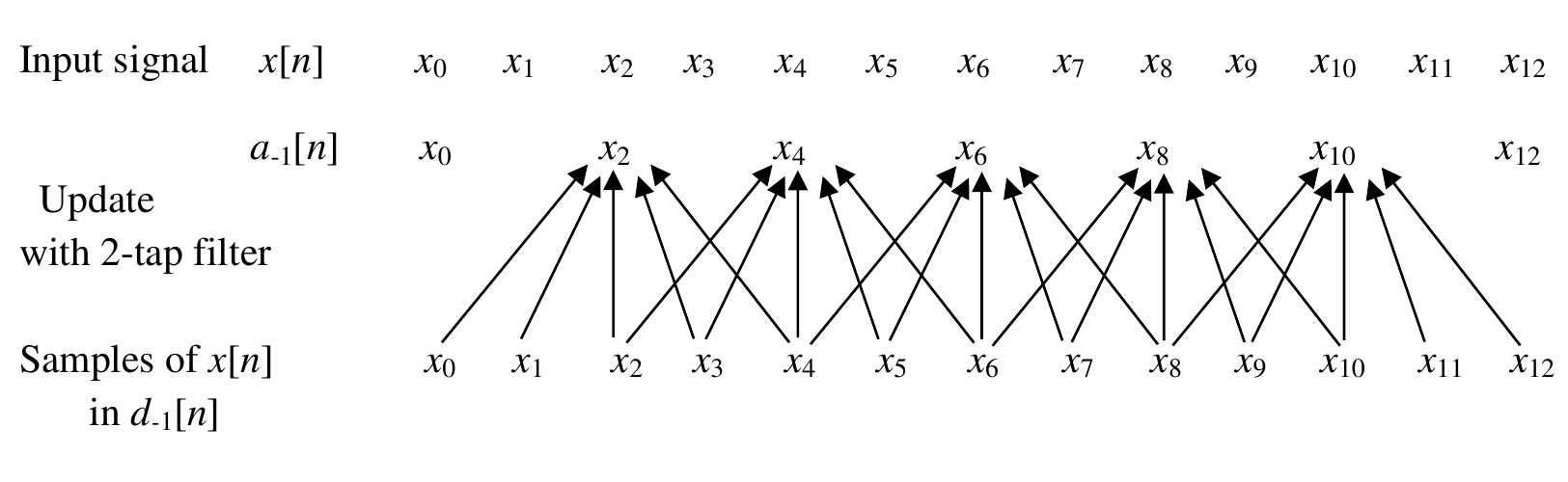}
\vspace{-1.6em}
\caption{Even samples being updated from neighboring samples with a 2-tap $T(z)$ and 2-tap update filter $S(z)$}
\vspace{-1.4em}
\label{Fig for update diagram}
\end{figure*}

This is to note that the $n^{th}$ sample of $p[n]$ predicts sample $\tilde{x}[2n+1]$. On expanding \eqref{eq19}, we obtain
\begin{align}
p[n]=&t[0]\tilde{x}[2n-L_t+2]+...+t[{\frac{L_t}{2}-1}]\tilde{x}[2n] \nonumber \\
&+t[{\frac{L_t}{2}}]\tilde{x}[2n+2]+...+t[{L_t-1}]\tilde{x}[2n+L_t].
\end{align}
Thus, the sample $\tilde{x}[2n+1]$ is predicted from its exact nearest even neighbors. For better clarity, Fig. \ref{Fig for predict diagram} shows even neighboring samples that are being used to predict odd samples with a 2-tap and 4-tap predict filter $T(z)$. 
This proves the theorem.
\end{proof}

Subband signal $\tilde{\mathbf{d}}_{-1}$ can be considered as the noisy version of detail coefficients $\mathbf{d}_{-1}$ that could be obtained by passing the original signal \textbf{x} through the signal-matched analysis wavelet branch. This can be written as:
\begin{equation}
\tilde{\mathbf{d}}_{-1}=\mathbf{d}_{-1}+\bm{\eta}_1,
 \label{noisy wavelet coefficients-1}
\end{equation}
where $\bm{\eta}_1$ is the corresponding error.

Further, we analyze this coarser approximate signal $\tilde{\textbf{x}}$ using the same biorthogonal 5/3 wavelet with which it had been reconstructed and obtain detail coefficients (subband coefficients of highpass filtered branch) $\hat{\mathbf{d}}_{-1}$. Again, this signal can be considered as the noisy version of detail coefficients $\mathbf{d}_{-1}$ and can be written as below:
\begin{equation}
\hat{\mathbf{d}}_{-1}=\mathbf{d}_{-1}+\bm{\eta}_2,
 \label{noisy wavelet coefficients-2}
\end{equation}
where $\bm{\eta}_2$ is the corresponding error. 

From \eqref{noisy wavelet coefficients-1} and \eqref{noisy wavelet coefficients-2}, we obtain:
\begin{equation}
\hat{\mathbf{d}}_{-1}=\tilde{\mathbf{d}}_{-1}+\bm{\eta},
 \label{noisy wavelet coefficients-3}
\end{equation}
where $\bm{\eta}$ is the error that consists of two components: 1) because we are using approximate signal $\tilde{\mathbf{x}}$ instead of the original signal \textbf{x} and 2) because we are using biorthogonal 5/3 wavelet instead of the signal-matched wavelet. 

On substituting for $\tilde{\mathbf{d}}_{-1}$ from \eqref{eq:no16}, \eqref{Equation for Prediction output}, and \eqref{PredictStageFilterStructure} in \eqref{noisy wavelet coefficients-3} and writing in the matrix form, we obtain
\begin{equation}
{\hat{\mathbf{d}}}_{-1}=\mathbf{At}+\bm{\eta},
\label{eq22}
\end{equation}
where $\mathbf{A}$ is the convolution matrix consisting of even and odd indexed samples of $\tilde{\textbf{x}}$ and $\textbf{t}$ denotes the vectorized form of predict stage filter $t[n]$ or $T(z)$. We solve for $\textbf{t}$ in \eqref{eq22} using least squares method and substitute in \eqref{eq:no1} and \eqref{eq:no2} to update the analysis highpass and synthesis lowpass filters and obtain new filters $H_1^{new}(z)$ and $F_0^{new}(z)$, respectively. This ends the predict stage.

\textbf{(b) Update Stage: }
In the update stage, update polynomial $S(z)$ is required to be computed. In order to do this, we write the output of the upper subband signal using the lower subband signal, $\tilde{d}_{-1}[n]$ (refer Fig. 1) as below:
\begin{equation}
a_{-1}[n]=x_e[n]+\tilde{d}_{-1}[n] \ast s[n],
\label{eq_update}
\end{equation} 
where $s[n]$ is the time domain description of the update stage filter $S(z)$. 

Similar to the predict stage filter, we require to choose $s[n]$ such that the elements of the upper branch are updated using nearest neighbors only. The corresponding structure for $s[n]$ is provided by Theorem-2 as below.

\begin{theorem}
The following structure of the update stage filter allows the elements of the upper branch to be updated from nearest neighbors: 
\begin{equation}
S(z)=z^{(\frac{L_s}{2}-1)}\sum_{i=0}^{L_s-1} s[i]z^{-i},
\label{UpdateStageFilterStructure}
\end{equation}
where $L_s$ is the length of filter $S(z)$ or $s[n]$. Note that $S(z)$ is an even-length filter that ensures that sample update is done using equal number of past and future samples.
\end{theorem}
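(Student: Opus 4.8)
The plan is to mirror the argument used for the predict-stage filter in Theorem-1, now substituting the proposed structure \eqref{UpdateStageFilterStructure} into the update relation \eqref{eq_update} and expanding the convolution to identify exactly which detail coefficients enter the update of a given approximation sample. First I would write the update contribution $\tilde{d}_{-1}[n]\ast s[n]$ in the $z$-domain as $D(z)S(z)$ with $D(z)=Z\{\tilde{d}_{-1}[n]\}$, and insert $S(z)=z^{(\frac{L_s}{2}-1)}\sum_{i=0}^{L_s-1}s[i]z^{-i}$. Collecting powers of $z$ and returning to the time domain, the leading factor $z^{(\frac{L_s}{2}-1)}$ contributes a fixed shift, yielding
\[
(\tilde{d}_{-1}\ast s)[n]=\sum_{i=0}^{L_s-1}s[i]\,\tilde{d}_{-1}\!\left[n-i+\tfrac{L_s}{2}-1\right],
\]
precisely analogous to how \eqref{eq19} followed from \eqref{PredictStageFilterStructure} in the predict case.

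Next I would expand this sum to expose the range of indices: as $i$ runs from $0$ to $L_s-1$, the argument of $\tilde{d}_{-1}$ runs from $n+\tfrac{L_s}{2}-1$ down to $n-\tfrac{L_s}{2}$, so the update of $a_{-1}[n]$ draws on the detail coefficients $\tilde{d}_{-1}[n-\tfrac{L_s}{2}],\dots,\tilde{d}_{-1}[n+\tfrac{L_s}{2}-1]$ and no others. The crux is then the geometric interpretation. The approximation sample $a_{-1}[n]$ corresponds to the even location $2n$ of the original stream, whereas each detail sample $\tilde{d}_{-1}[m]$ sits at the odd location $2m+1$ (recall from \eqref{eq:no16} that $\tilde{d}_{-1}[n]$ is associated with $\tilde{x}[2n+1]$). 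The two streams are therefore offset by a half sample, and the detail coefficients nearest to position $2n$ are those at $2n-1$ and $2n+1$, i.e. $\tilde{d}_{-1}[n-1]$ and $\tilde{d}_{-1}[n]$. For the shortest even filter $L_s=2$ the displayed formula returns exactly these two, and for general even $L_s$ it returns the $\tfrac{L_s}{2}$ coefficients immediately preceding and the $\tfrac{L_s}{2}$ immediately following position $2n$, as claimed.

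I expect the main obstacle to be bookkeeping the half-sample offset correctly rather than any hard estimate. Because the detail stream lives on the odd grid while the updated approximation lives on the even grid, the update filter must carry a leading factor with a positive exponent, $z^{+(\frac{L_s}{2}-1)}$, in contrast to the negative exponent $z^{-(\frac{L_t}{2}-1)}$ of the predict filter in \eqref{PredictStageFilterStructure}; getting this sign right is exactly what balances the $\tfrac{L_s}{2}$ past neighbors against the $\tfrac{L_s}{2}$ future neighbors about position $2n$. The even length of $S(z)$ then enforces this past/future symmetry, and the proof concludes in the same manner as Theorem-1.
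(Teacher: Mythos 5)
Your proposal is correct and follows essentially the same route as the paper: substitute the proposed structure of $S(z)$ into the convolution, read off that the update of $a_{-1}[n]$ draws on $\tilde{d}_{-1}[n-\tfrac{L_s}{2}],\dots,\tilde{d}_{-1}[n+\tfrac{L_s}{2}-1]$, and interpret this as a symmetric neighborhood about position $2n$. The only cosmetic difference is that the paper additionally fixes a $2$-tap predict filter and expands the result all the way down to the $\tilde{x}[2n-L_s],\dots,\tilde{x}[2n+L_s]$ samples, whereas you argue directly on the detail-coefficient grid via the half-sample offset; both establish the same claim.
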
 
\begin{proof}
For the sake of simplicity, let us consider two tap predict stage filter that provides the following detail coefficients:
\begin{equation}
\tilde{d}_{-1}[n]=-t[0]\tilde{x}[2n]+\tilde{x}[2n+1]-t[1]\tilde{x}[2n+2]
\end{equation}
Once this signal is passed through the update stage filter in the update branch, we obtain:
\begin{equation}
u[n]=\tilde{d}_{-1}[n] \ast s[n],
\end{equation}
where `$\ast$' is the convolution operator. 
%\begin{figure*}[!ht]
%\centering
%\includegraphics[scale=0.9]{Update_diagram}
%\caption{Even samples being updated from neighboring samples with a 2-tap $T(z)$ and 2-tap update filter $S(z)$}
%\label{Fig for update diagram}
%\end{figure*}
With the choice of filter in \eqref{UpdateStageFilterStructure}, we obtain:
\begin{align}
\label{update_2}
u[n]=&s[0]\tilde{d}_{-1}[n+\frac{L_s}{2}-1]+s[1]\tilde{d}_{-1}[n+\frac{L_s}{2}-2]+... \nonumber \\
&+s[L_s-2]\tilde{d}_{-1}[n-\frac{L_s}{2}+1]+s[L_s-1]\tilde{d}_{-1}[n-\frac{L_s}{2}].
\end{align}
\begin{figure*}[!ht]
\centering
\vspace{-1em}
\begin{subfigure}[b]{0.19\textwidth}
\includegraphics[scale=0.19]{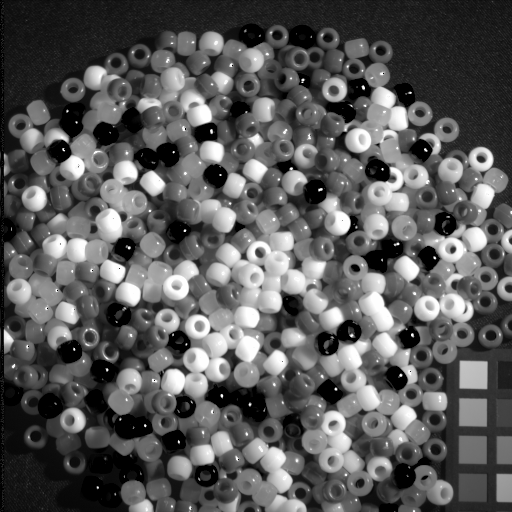}
%\vspace{-1em}
\caption{Beads}
\label{Beads}
\end{subfigure}
\begin{subfigure}[b]{0.19\textwidth}
\includegraphics[scale=0.19]{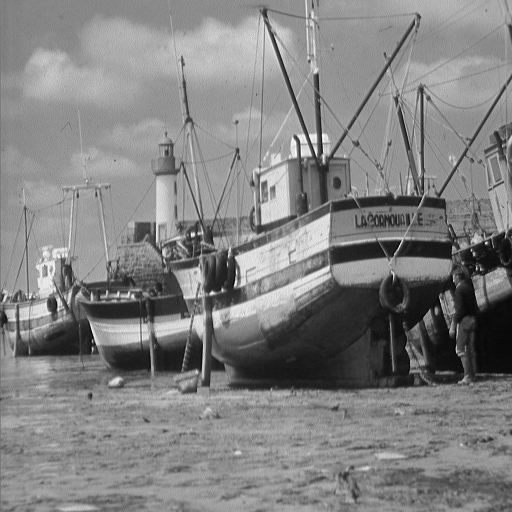}
\caption{Boat}
\label{Boat}
\end{subfigure}
\begin{subfigure}[b]{0.19\textwidth}
\includegraphics[scale=0.19]{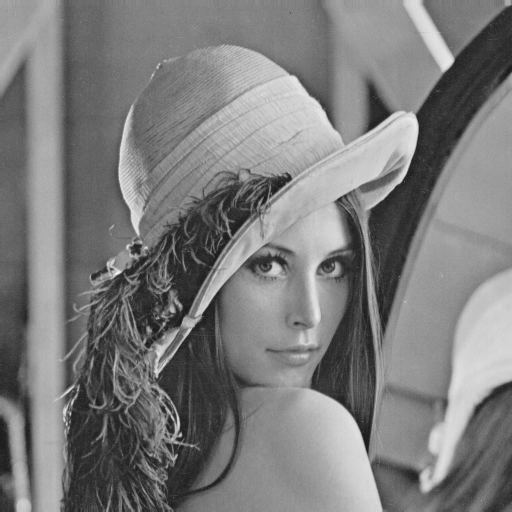}
\caption{Lena}
\label{Lena}
\end{subfigure}
\begin{subfigure}[b]{0.19\textwidth}
\includegraphics[scale=0.19]{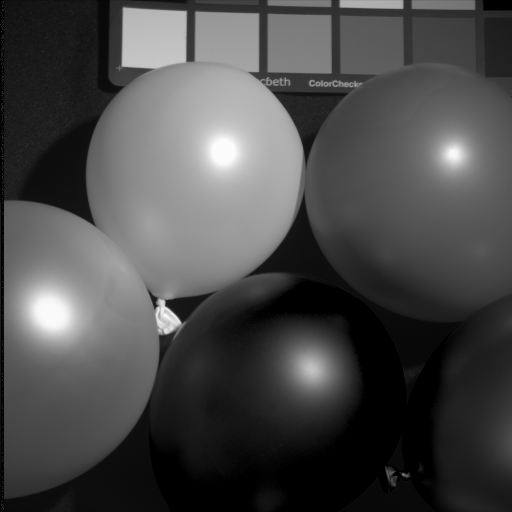}
\caption{Balloon}
\label{Balloon}
\end{subfigure}
\begin{subfigure}[b]{0.19\textwidth}
\includegraphics[scale=0.19]{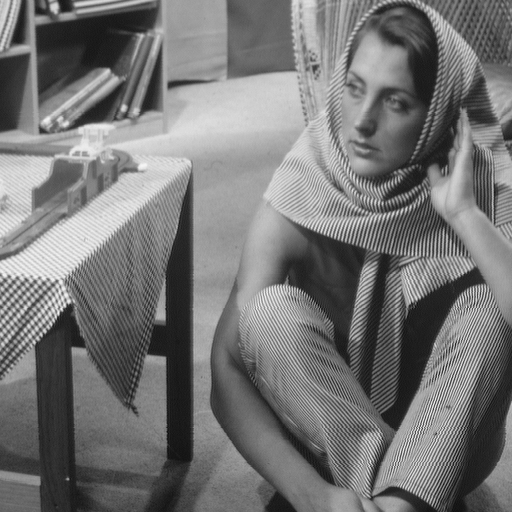}
\caption{Barbara}
\label{Barbara}
\end{subfigure}
\newline
\begin{subfigure}[b]{0.19\textwidth}
\includegraphics[scale=0.19]{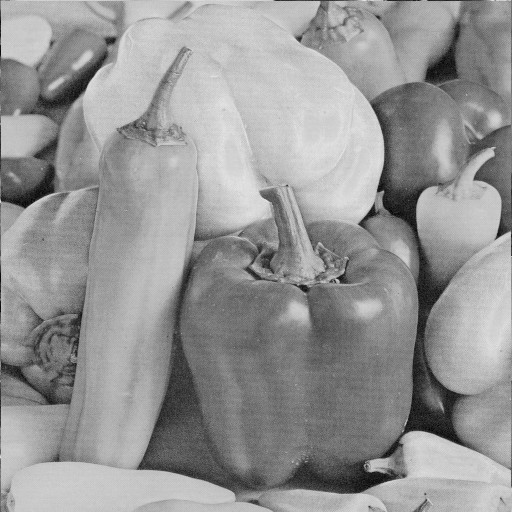}
\caption{Peppers}
\label{Peppers}
\end{subfigure}
\begin{subfigure}[b]{0.19\textwidth}
\includegraphics[scale=0.19]{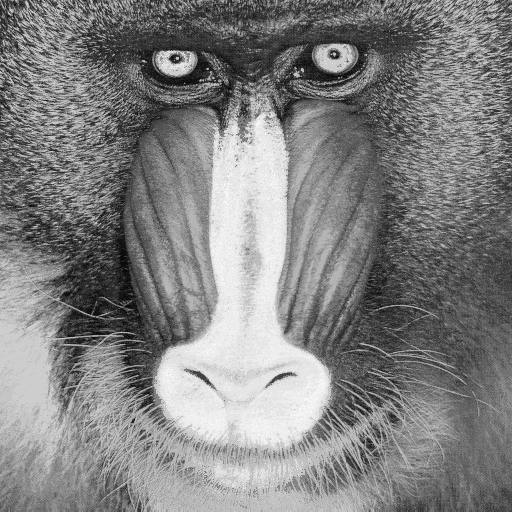}
\caption{Mandril}
\label{Mandril}
\end{subfigure}
\begin{subfigure}[b]{0.19\textwidth}
\includegraphics[scale=0.25]{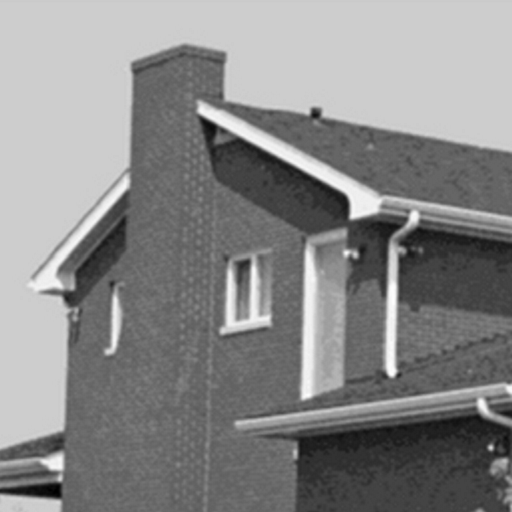}
\caption{House}
\label{House}
\end{subfigure}
\begin{subfigure}[b]{0.19\textwidth}
\includegraphics[scale=0.25]{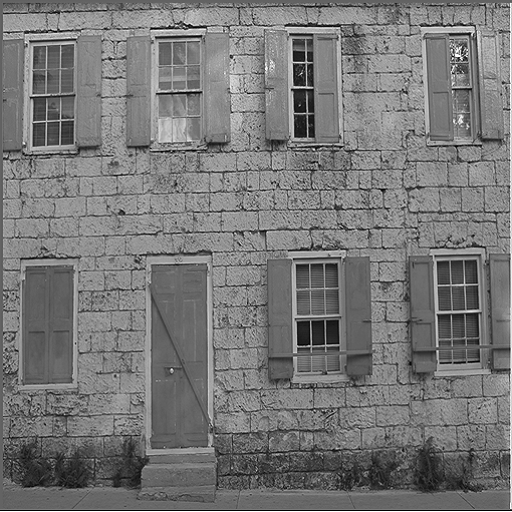}
\caption{Building}
\label{Building}
\end{subfigure}
\begin{subfigure}[b]{0.19\textwidth}
\includegraphics[scale=0.25]{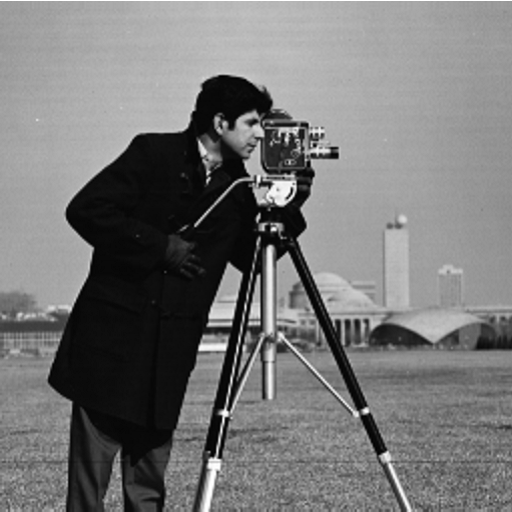}
\caption{Cameraman}
\label{Cameraman}
\end{subfigure}
\vspace{-0.6em}
\caption{Images used in experiments}
\label{Fig for All Images}
\vspace{-1.6em}
\end{figure*}
On expanding \eqref{update_2} and rearranging, we obtain
\begin{align}
u[n]&=-s[0]t[1]\tilde{x}[2n+L_s]+s[0]\tilde{x}[2n+L_s-1]+\nonumber \\
&(-s[0]t[0]-s[1]t[1])\tilde{x}[2n+L_s-2]+...+\nonumber \\
&(-s[L_s-2]t[0]-s[L_s-1]t[1])\tilde{x}[2n-L_s+2]+\nonumber \\
&s[L_s-1]\tilde{x}[2n-L_s+1]-s[L_s-1]t[0]\tilde{x}[2n-L_s]
\end{align}
The above signal updates approximate coefficients, i.e., $\tilde{x}[2n]$. It can be clearly noticed that coefficients $\tilde{x}[2n]$ are updated using the nearest neighbors from $\tilde{x}[2n+L_s]$ to $\tilde{x}[2n-L_s]$. For better clarity, Fig. \ref{Fig for update diagram} shows the neighboring samples that are being used to update even samples with a 2-tap predict filter $T(z)$ and a 2-tap update filter $S(z)$. 
\end{proof}

This subband signal $a_{-1}[n]$ is passed through a 2-fold upsampler that provides:
\begin{equation}
 \tilde{x}_{1u}[n] =
  \begin{cases}
    a_{-1}[\frac{n}{2}]        & \quad \text{if } n \text{ is a multiple of 2}\\
    0  & \quad \text{otherwise.}\\
  \end{cases}
  \label{eq14}
\end{equation}

Next, signal $\tilde{x}_{1u}[n]$ is passed through the synthesis lowpass filter $f^{new}_0[n]$ that was updated in the predict stage mentioned earlier. This provides us the signal $\tilde{x}_{1}[n]$ (shown in Fig. 2) reconstructed from the upper subband only and is given by
\begin{equation}
\tilde{x}_{1}[n]=\tilde{x}_{1u}[n] \ast f^{new}_0[n].
\label{eq15}
\end{equation}    

Assuming that the original signal of interest is rich in low frequency content, signal $\tilde{x}_{1}[n]$ reconstructed in the upper subband should be in close approximation to the input signal $\tilde{\textbf{x}}$. This allows us to solve for the update stage filter as below:

\begin{equation}
\mathbf{\hat{s}}=\min_{\mathbf{s}} \sum_{n} (\tilde{x}_{1}[n]-\tilde{x}[n])^2.
\label{eq16}
\end{equation} 

It can be noted from \eqref{eq_update}, (\ref{eq14}) and (\ref{eq15}), that $\tilde{\textbf{x}}_{1}$ can be written in terms of update stage filter $s[n]$ obtained on solving \eqref{eq16} using least squares method. Correspondingly, analysis lowpass filter $H_0(z)$ and synthesis highpass filter $F_1(z)$ are updated to $H^{new}_0(z)$ and $F^{new}_1(z)$ using \eqref{eq:no3} and \eqref{eq:no4}, respectively. This completes our design of matched wavelet. 

This is to be noted that, since the lifting scheme is modular, more number of such predict and update stage filters can be estimated and appended in order to design higher order or larger length filters.
  
\subsubsection{Stage-3: Signal Reconstruction using Matched Wavelet}
\label{Section For Step-3}
Once we have estimated matched wavelet, we employ \eqref{CS solution} on measured subsampled measured signal $\textbf{y}$ using CS with matched wavelet in $\textbf{W}$ to recover the original signal $\textbf{x}$. 

The above matched wavelet design for 1-D signals is applied on row-scanned and column-scanned signal of a given image and corresponding matched wavelet is designed for the row- and the column-space separately and hence, completes the design of image-matched separable wavelet system. Also, this is to note that the proposed method of designing matched wavelet from compressively sensed images is independent of the sensing matrix. The proposed matched wavelet can be designed from images compressively sensed using any sensing matrix.
\vspace{-0.8em}
\section{Experiments and Results}
\label{Section for Experimental Results}
In this section, we present CS-based reconstruction results of images using image-matched wavelets designed from compressively sensed images. We apply the proposed method on ten natural images shown in Fig. \ref{Fig for All Images}. Images with different spectral contents have been selected. For example, Balloon, House and cameraman are rich in low frequencies, Beads, Barbara and Mandrill are rich in high frequencies, and rest of the images have varied lower and higher frequency content.

\begin{figure*}[!ht]
\centering
\begin{subfigure}{0.48\textwidth}
\includegraphics[scale=0.58]{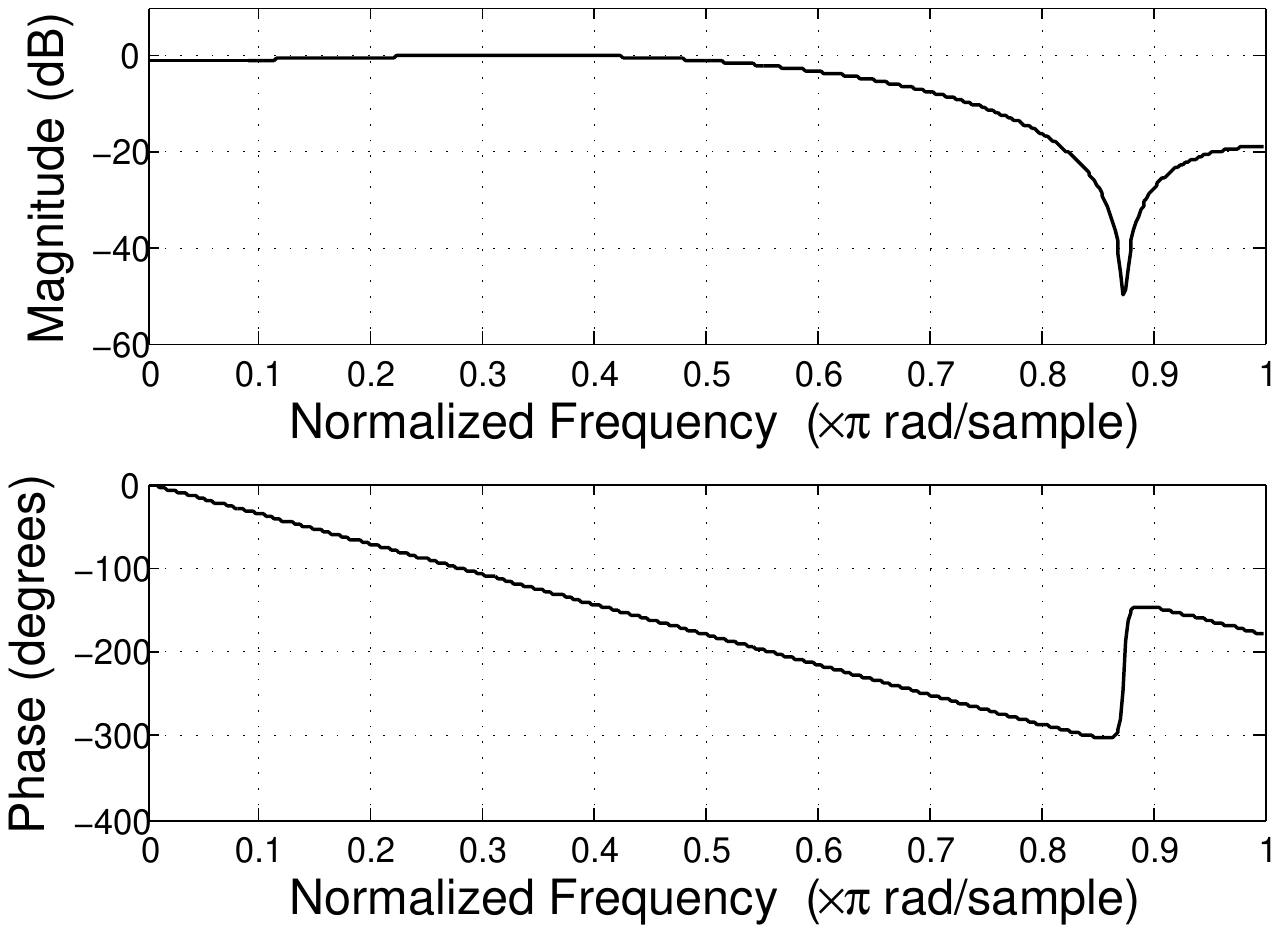}
\vspace{-0.6em}
\caption{Low pass filter}
\label{SubFigColumnLPF_WLP}
\end{subfigure}
\begin{subfigure}{0.48\textwidth}
\includegraphics[scale=0.58]{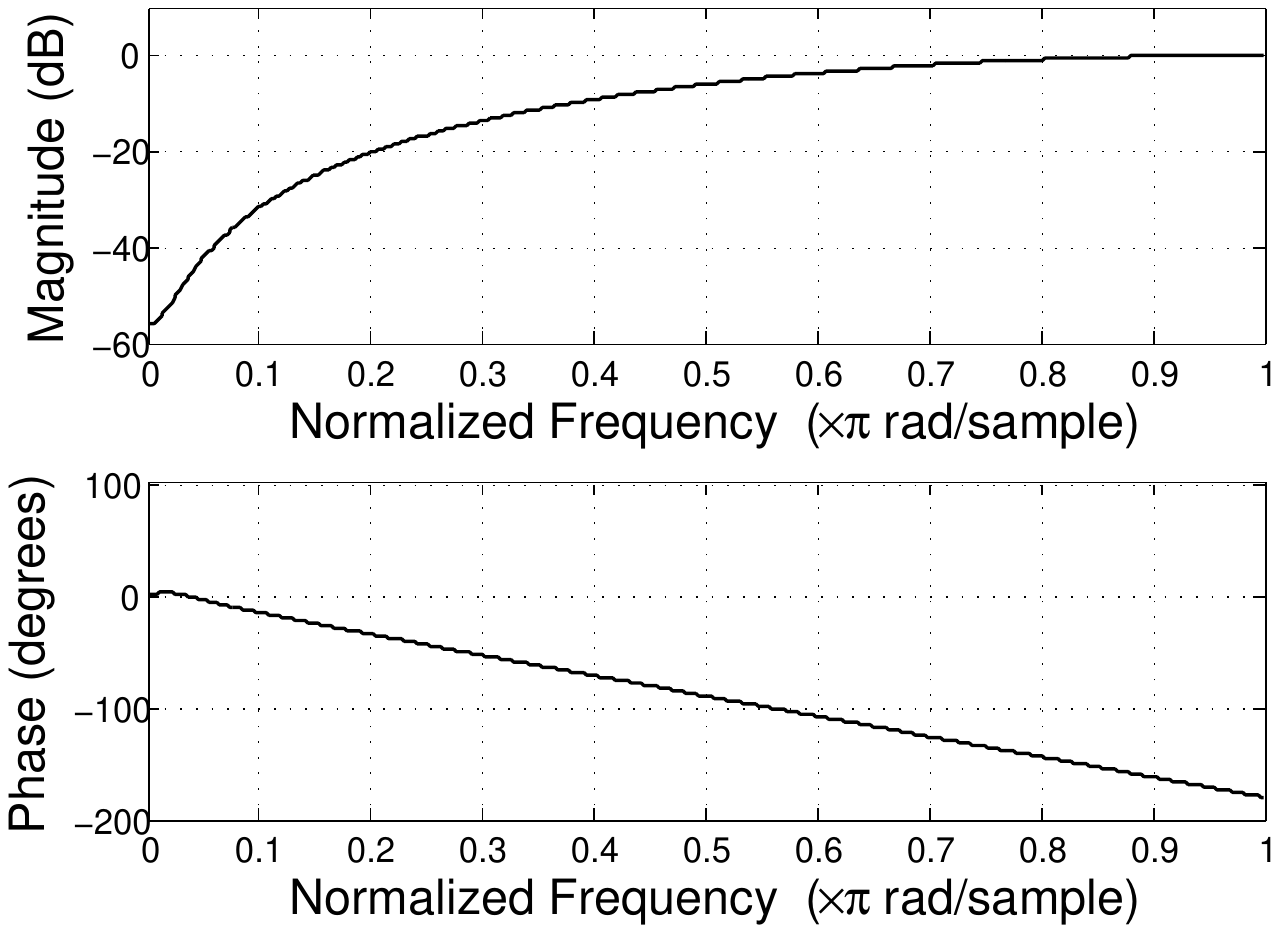} 
\vspace{-0.6em}
\caption{High pass filter}
\label{SubFigColumnHPF_WLP}
\end{subfigure}
\vspace{-0.6em}
\caption{Frequency response of 5/3 length analysis filters designed with `Lena' image sensed at 20\% sampling ratio}
\label{Fig for Frequency Response-1}
\end{figure*}
%\begin{figure*}[!ht]
%\centering
%\begin{subfigure}[b]{0.23\textwidth}
%\includegraphics[scale=0.32]{ColAnalLPFLena_9by7.pdf}
%\caption{LPF without LP constraint}
%\label{SubFigColumnLPF_WLP}
%\end{subfigure}
%\begin{subfigure}[b]{0.23\textwidth}
%\includegraphics[scale=0.32]{ColAnalHPFLena_9by7.pdf} 
%\caption{HPF without LP constraint}
%\label{SubFigColumnHPF_WLP}
%\end{subfigure}
%\begin{subfigure}[b]{0.23\textwidth}
%\includegraphics[scale=0.32]{ColAnalLPFLena_9by7_LP.pdf}
%\caption{LPF with LP constraint}
%\label{SubFigColumnLPF_LP}
%\end{subfigure}
%\begin{subfigure}[b]{0.23\textwidth}
%\includegraphics[scale=0.32]{ColAnalHPFLena_9by7_LP.pdf}
%\caption{HPF with LP constraint}
%\label{SubFigColumnHPF_LP}
%\end{subfigure}
%\caption{Frequency response of 9/7 length analysis filters designed with `Lena' image sensed at 20\% sampling ratio}
%\label{Fig for Frequency Response-2}
%\vspace{2em}
%\end{figure*}
\vspace{-0.8em}
\subsection{Frequency Response and Filter Coefficients}
\label{SectionForFreqResponse}
We design image-matched wavelets for all the images shown in Fig. \ref{Fig for All Images}. Table-\ref{Table for Filter Coefficients} shows the analysis filter designed with the image `Lena' compressively sensed at a sampling ratio of 20\% with PCI sensing matrix. Coefficients for filters designed are shown in Table-\ref{Table for Filter Coefficients} for both column matched and row matched wavelets along with the coefficients of predict and update stage filters $T(z)$ and $S(z)$, respectively.  

\begin{table*}[!ht]
\vspace{-0.4em}
\centering
\caption{Coefficients of analysis filters designed with image `Lena'}
\vspace{-0.4em}
\label{Table for Filter Coefficients}
\begin{tabular}{ccc c} \hline
\multicolumn{4}{c}{Image matched 5/3 filters} \\ \hline
&\makecell{Predict Stage filter $T(z)$} & \makecell{Update Stage filter $S(z)$} & Filter coefficients \\ \hline
 \multirow{2}{*}{\makecell{Column  Matched}} &  \multirow{2}{*}{[0.5028 0.4941]} & \multirow{2}{*}{[0.2858 0.2790]}  & $h_0[n]$=[-0.1412 0.2858 0.7185 0.279 -0.1403] \\
&  & & $h_1[n]$=[-0.4941 1.0000 -0.5028] \\ \cline{1-4} 

 \multirow{2}{*}{\makecell{Row  Matched}}  & \multirow{2}{*}{[0.4959 0.5084]}  & \multirow{2}{*}{[0.2775 0.2871]} &$h_0[n]$=[-0.1411 0.2775 0.7164 0.2871 -0.1424] \\
& & & $h_1[n]$=[-0.5084 1.0000 -0.4959 ] \\ \cline{1-4} 
\end{tabular}
\vspace{-1em}
\end{table*}

\begin{figure*}[!ht]
\vspace{0em}
\centering
\begin{subfigure}{0.32\textwidth}
\centering
\includegraphics[scale=0.42]{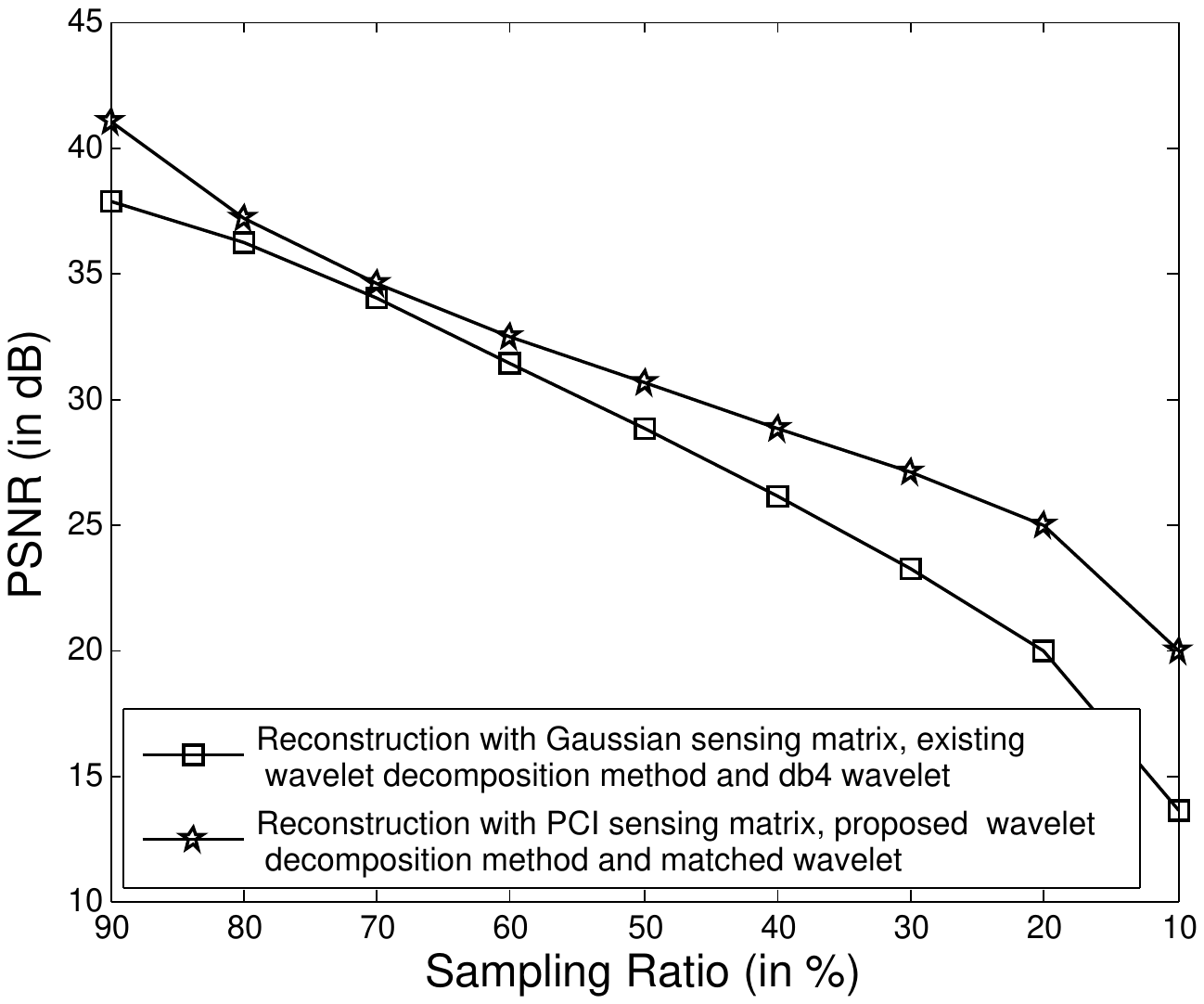}
\caption{Beads}
\end{subfigure}
\begin{subfigure}{0.32\textwidth}
\centering
\includegraphics[scale=0.42]{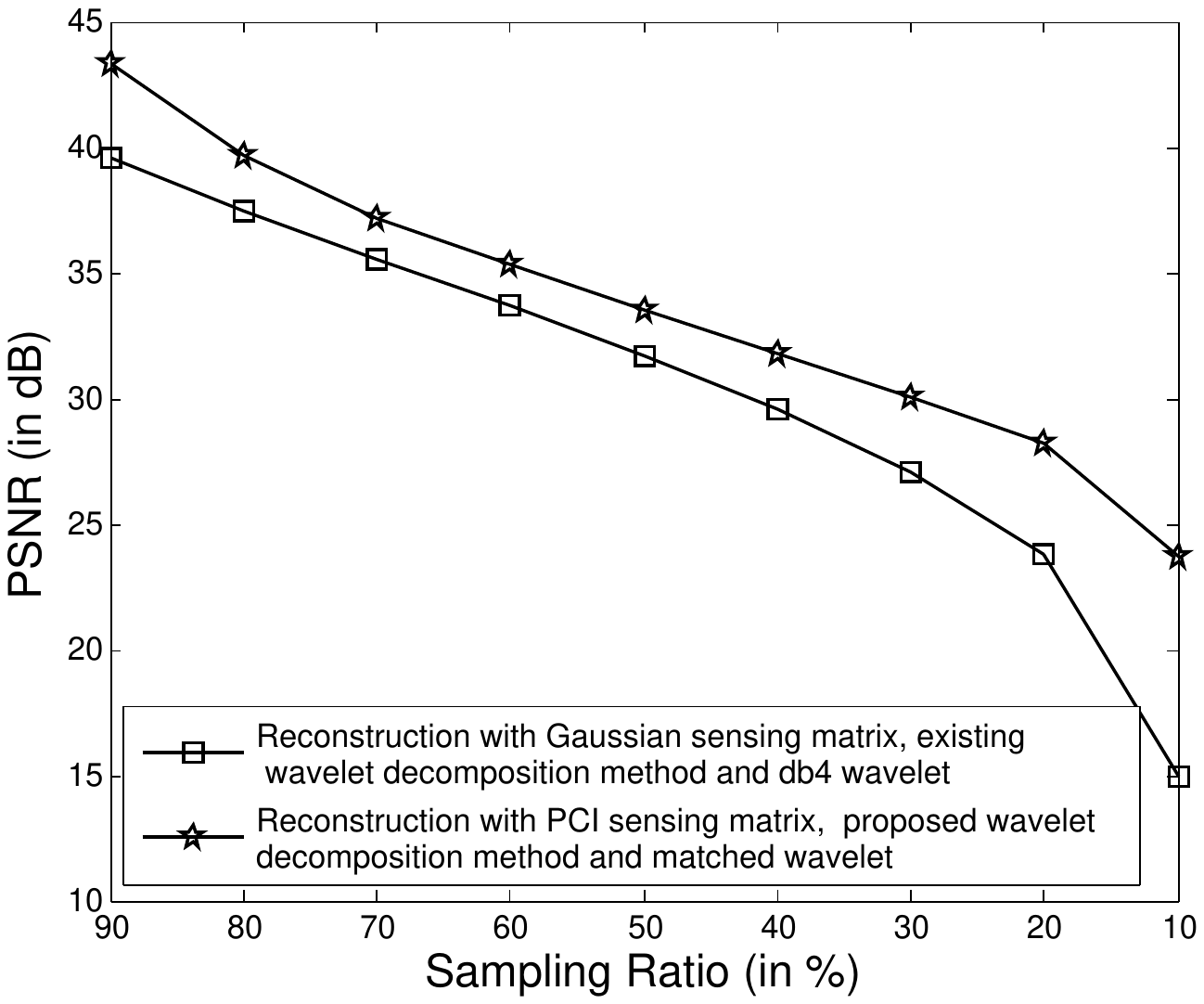}
\caption{Lena}
\end{subfigure}
\begin{subfigure}{0.32\textwidth}
\centering
\includegraphics[scale=0.42]{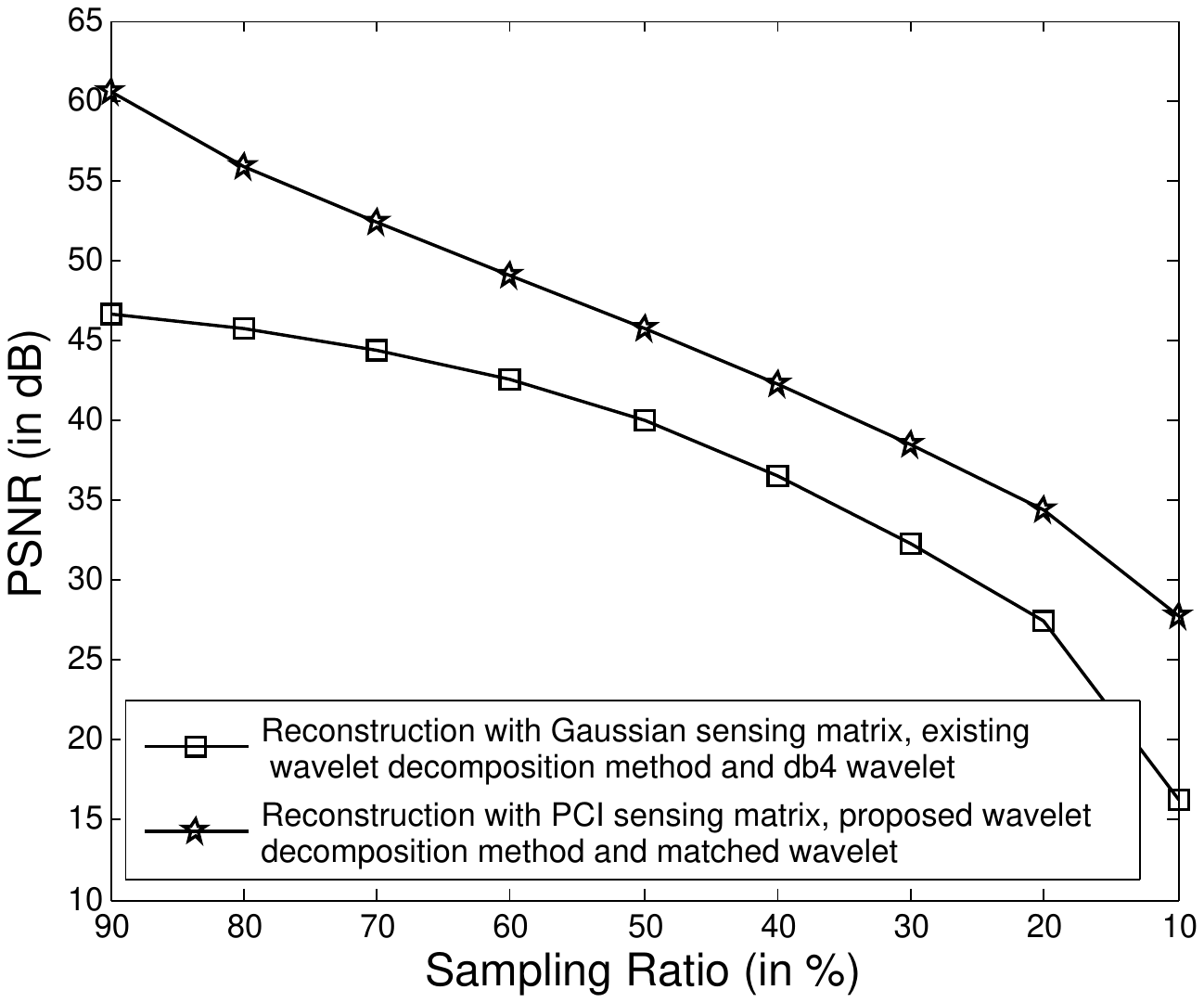}
\caption{House}
\end{subfigure}
\captionsetup{justification=centering}
\vspace{-0.6em}
\caption{\small CS-based reconstruction comparison of 1) the existing methodology: Gaussian sensing matrix, existing wavelet decomposition, and dB4 wavelet with 2) our proposed methodology: PCI sensing matrix, proposed wavelet decomposition, and proposed matched-wavelet design (5/3 length wavelet design).}
\vspace{-1.3em}
\label{FigForInitialVsFinalComparison}
\end{figure*}
\begin{figure*}[!ht]
\centering
\begin{subfigure}[b]{0.32\textwidth}
\includegraphics[scale=0.15]{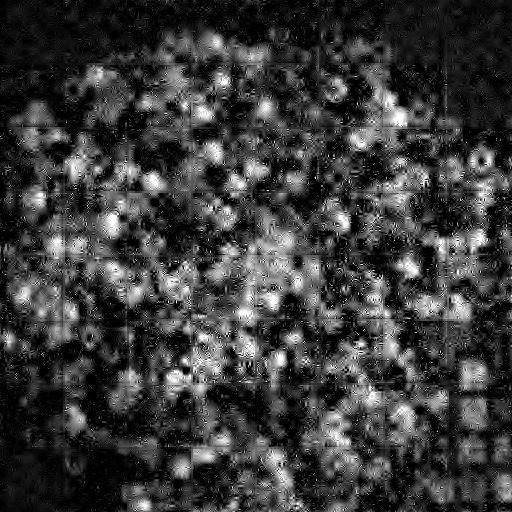}
\includegraphics[scale=0.15]{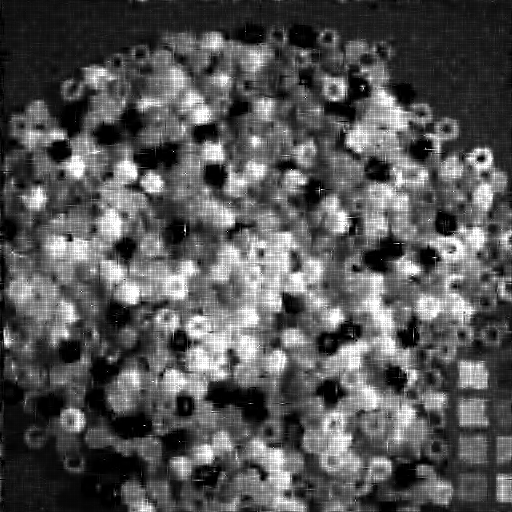}
\caption{Beads}
\vspace{-0.2em}
\label{Gaussian_Beads}
\end{subfigure}
\begin{subfigure}[b]{0.32\textwidth}
\includegraphics[scale=0.15]{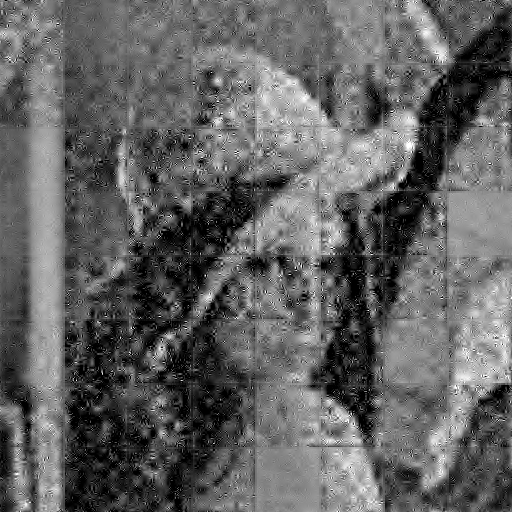}
\includegraphics[scale=0.15]{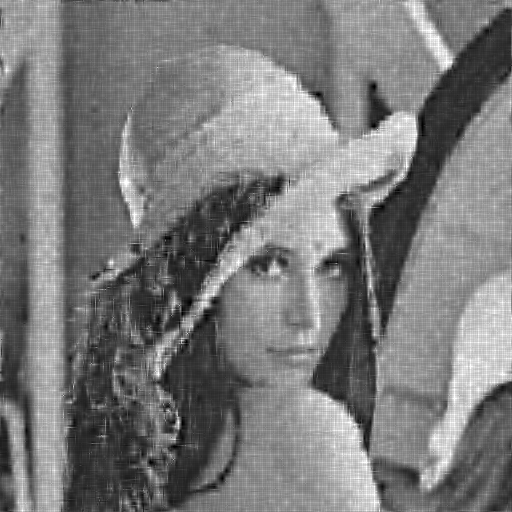}
\caption{Lena}
\vspace{-0.2em}
\label{Gaussian_Lena}
\end{subfigure}
\begin{subfigure}[b]{0.32\textwidth}
\includegraphics[scale=0.15]{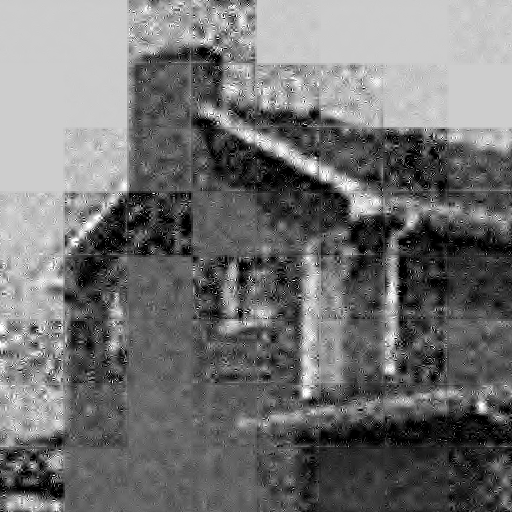}
\includegraphics[scale=0.15]{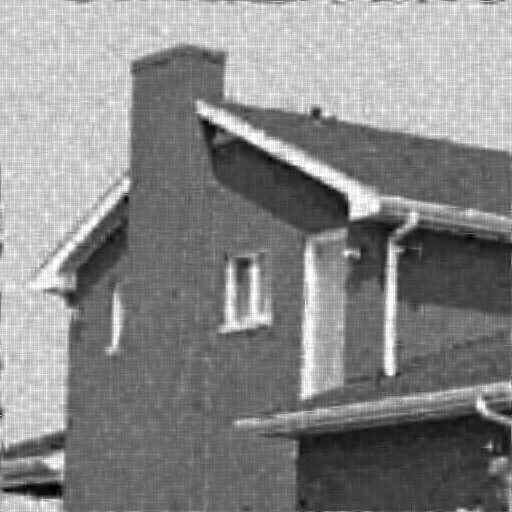}
\caption{House}
\vspace{-0.2em}
\label{Gaussian_House}
\end{subfigure}
\captionsetup{justification=centering}
\vspace{-0.6em}
\caption{\small Visual comparison of CS-based image reconstruction at 10\% sampling ratio with existing methodology and the proposed methodology. Left image in each subfigure shows image reconstructed with existing methodology while right image shows the image reconstructed with the proposed methodology.} 
\label{Fig for Gaussian reconstruction}
\vspace{-2em}
\end{figure*}
Fig. \ref{Fig for Frequency Response-1} shows normalized frequency responses of 5/3 length analysis filters designed for `Lena' image. These frequency response have been shown for only column-matched wavelets for brevity purposes.  

\textit{Discussion:} From the above figure, we note that the filters designed with the proposed method also exhibit good linearity in the phase response. This is because the proposed method of matched-wavelet design is itself able to capture linear phase characteristics of the image, if present.
\begin{table*}
\centering
\vspace{-0.5em}
\captionsetup{justification=centering}
\caption{Reconstruction accuracy on CS-based image reconstruction with standard wavelets and with image-matched wavelets designed from compressively sensed data with 5/3 length filters. PCI sensing matrix and the proposed wavelet decomposition strategy have been used to generate these results.}
\label{comparison with standard wavelets}
\begin{tabular}{ccccc ccccc ccc}
\hline
\makecell{Sampl-\\ing ratio} & \makecell{Wavelet \\used} & Beads  & Boat  & Lena  & \makecell{Ballo-\\ons}  & \makecell{Bar-\\bara}  & \makecell{Pepp-\\ers}  & \makecell{Mand-\\ril}  & House  & \makecell{Build-\\ing}  & \makecell{Camer-\\aman} & \makecell{Average \\ PSNR} \\ \hline
\multirow{5}{*}{90}             & db2          & 36.11 & 38.42 & 40.20 & 49.72 & 35.79 & 38.30 & 30.41 & 52.52 & 34.90 & 43.55 & 39.99   \\
               & db4          & 37.23 & 39.47 & 41.47 & 51.00 & 37.77 & 38.61 & 30.71 & 55.72 & 35.03 & 45.53 & 41.25   \\
               & bior 5/3      & 35.87 & 37.77 & 39.64 & 49.38 & 35.05 & 37.68 & 29.67 & 52.66 & 34.42 & 43.16 & 39.53   \\
               & Matched      & \textbf{41.02} & \textbf{41.07} & \textbf{43.35} & \textbf{52.41} & \textbf{36.40} & \textbf{38.02} & \textbf{31.63} & \textbf{60.52} & \textbf{37.30} & \textbf{50.54} & \textbf{43.23}   \\ \hline
                
\multirow{5}{*}{80}             & db2          & 32.35 & 34.55 & 36.47 & 46.30 & 31.82 & 35.02 & 27.09 & 47.55 & 31.43 & 38.96 & 36.15   \\
               & db4          & 33.51 & 35.54 & 37.69 & 47.19 & 33.52 & 35.27 & 27.35 & 50.77 & 31.54 & 40.91 & 37.33   \\
               & bior 5/3      & 32.10 & 33.93 & 35.84 & 45.67 & 31.12 & 34.40 & 26.35 & 48.02 & 30.96 & 38.67 & 35.71   \\
              & Matched      & \textbf{37.16} & \textbf{37.47} & \textbf{39.67} & \textbf{48.96} & \textbf{32.43} & \textbf{35.05} & \textbf{28.29} & \textbf{55.84} & \textbf{33.76} & \textbf{46.09} & \textbf{39.47}   \\ \hline
\multirow{5}{*}{70}             & db2          & 29.87 & 31.98 & 33.97 & 43.46 & 29.20 & 32.90 & 25.04 & 44.10 & 29.30 & 35.74 & 33.56   \\
               & db4          & 30.93 & 32.82 & 35.10 & 44.50 & 30.67 & 33.14 & 25.27 & 46.94 & 29.32 & 37.51 & 34.62   \\
               & bior 5/3      & 29.51 & 31.31 & 33.33 & 43.12 & 28.51 & 32.30 & 24.28 & 44.43 & 28.80 & 35.22 & 33.08   \\
               & Matched      & \textbf{34.57} & \textbf{35.08} & \textbf{37.20} & \textbf{46.52} & \textbf{29.71} & \textbf{33.27} & \textbf{26.23} & \textbf{52.37} & \textbf{31.51} & \textbf{42.53} & \textbf{36.90}   \\ \hline
\multirow{5}{*}{60}             & db2          & 27.83 & 29.96 & 31.94 & 41.19 & 27.16 & 31.17 & 23.50 & 40.98 & 27.53 & 33.22 & 31.45   \\
               & db4          & 28.65 & 30.65 & 32.98 & 42.10 & 28.36 & 31.41 & 23.68 & 43.26 & 27.48 & 34.55 & 32.31   \\
               & bior 5/3      & 27.36 & 29.23 & 31.26 & 40.93 & 26.49 & 30.61 & 22.77 & 41.11 & 27.03 & 32.39 & 30.92   \\
               & Matched      & \textbf{32.46} & \textbf{33.22} & \textbf{35.29} & \textbf{44.63} & \textbf{27.55} & \textbf{32.00} & \textbf{24.68} & \textbf{49.08} & \textbf{29.77} & \textbf{39.73} & \textbf{34.84}   \\ \hline
\multirow{5}{*}{50}             & db2          & 25.96 & 27.98 & 29.97 & 39.12 & 25.48 & 29.70 & 22.17 & 37.90 & 25.90 & 30.67 & 29.48   \\
               & db4          & 26.70 & 28.60 & 30.85 & 39.75 & 26.29 & 29.91 & 22.32 & 39.64 & 25.81 & 31.74 & 30.16   \\
               & bior 5/3      & 25.52 & 27.41 & 29.29 & 38.90 & 24.83 & 29.22 & 21.51 & 37.94 & 25.42 & 29.76 & 28.98   \\
               & Matched      & \textbf{30.59} & \textbf{31.52} & \textbf{33.47} & \textbf{42.73} & \textbf{25.86} & \textbf{30.94} & \textbf{23.38} & \textbf{45.73} & \textbf{28.34} & \textbf{37.00} & \textbf{32.96}   \\ \hline
\multirow{5}{*}{40}             & db2          & 24.14 & 26.07 & 28.00 & 36.48 & 23.87 & 28.03 & 20.95 & 34.71 & 24.15 & 28.14 & 27.45   \\
               & db4          & 24.75 & 26.51 & 28.75 & 36.64 & 24.45 & 28.25 & 21.04 & 35.57 & 24.04 & 28.98 & 27.90   \\
               & bior 5/3      & 23.75 & 25.57 & 27.50 & 36.57 & 23.36 & 27.71 & 20.37 & 34.72 & 23.77 & 27.17 & 27.05   \\
               & Matched      & \textbf{28.78} & \textbf{29.82} & \textbf{31.81} & \textbf{40.34} & \textbf{24.43} & \textbf{29.93} & \textbf{22.25} & \textbf{42.33} & \textbf{27.00} & \textbf{34.36} & \textbf{31.11}   \\ \hline
\multirow{5}{*}{30}             & db2          & 21.93 & 24.04 & 25.88 & 33.33 & 22.18 & 26.25 & 19.69 & 30.69 & 22.24 & 25.39 & 25.16   \\
               & db4          & 22.51 & 24.34 & 26.41 & 32.68 & 22.53 & 26.39 & 19.69 & 30.96 & 22.06 & 25.99 & 25.36   \\
               & bior 5/3      & 21.76 & 23.66 & 25.53 & 33.66 & 21.80 & 25.97 & 19.23 & 30.72 & 21.90 & 24.71 & 24.89   \\
               & Matched      & \textbf{27.05} & \textbf{28.22} & \textbf{30.06} & \textbf{37.06} & \textbf{23.21} & \textbf{28.93} & \textbf{21.18} & \textbf{38.50} & \textbf{25.64} & \textbf{31.75} & \textbf{29.16}   \\ \hline
\multirow{5}{*}{20}             & db2          & 18.84 & 21.07 & 22.85 & 28.74 & 19.66 & 23.00 & 17.81 & 25.36 & 19.56 & 21.84 & 21.87   \\
               & db4          & 19.59 & 21.42 & 23.27 & 28.01 & 20.03 & 23.60 & 17.89 & 25.83 & 19.61 & 22.32 & 22.16   \\
               & bior 5/3      & 18.82 & 20.81 & 22.65 & 29.01 & 19.43 & 23.03 & 17.51 & 25.37 & 19.27 & 21.46 & 21.74   \\
               & Matched      & \textbf{24.98} & \textbf{26.32} & \textbf{28.25} & \textbf{33.05} & \textbf{22.18} & \textbf{27.70} & \textbf{20.17} & \textbf{34.41} & \textbf{24.03} & \textbf{28.80} & \textbf{26.99}   \\ \hline
\multirow{5}{*}{10}             & db2          & 10.75 & 9.79  & 10.03 & 13.47 & 9.74  & 8.20  & 8.95  & 9.60  & 10.24 & 10.35 & 10.11   \\
               & db4          & 11.59 & 10.61 & 11.06 & 14.53 & 10.55 & 9.28  & 9.59  & 10.73 & 10.87 & 11.28 & 11.01   \\
               & bior 5/3      & 11.01 & 10.19 & 10.46 & 13.77 & 10.15 & 8.62  & 9.35  & 10.08 & 10.67 & 10.61 & 10.49   \\
               & Matched      & \textbf{20.00} & \textbf{22.29} & \textbf{23.74} & \textbf{30.95} & \textbf{20.08} & \textbf{23.04} & \textbf{18.08} & \textbf{27.69} & \textbf{20.26} & \textbf{23.03} & \textbf{22.92}  \\ \hline
\end{tabular}
\vspace{-1.4em}
\end{table*}
\vspace{-1.4em}

\subsection{Experiment-1: Comparison of existing CS-based image reconstruction methodology with the proposed methodology}
\vspace{-0.2em}
In the application of CS-based image reconstruction, the proposed methodology of this paper has three contributions:

\begin{enumerate}
\item \textit{Proposed use of PCI sensing matrix}: that is computationally very inexpensive compared to the existing Gaussian matrix (Fig. \ref{TimeComparisonWithMeasurementMatrices}), but provides approx. 2 dB lower performance compared to the existing Gaussian matrix (Fig. \ref{PSNRComparisonWithMeasurementMatrices}).
\item \textit{New wavelet decomposition strategy} (Fig. \ref{Fig forWavelet transform}): that provides better results in CS-based image reconstruction compared to the existing wavelet decomposition strategy (Fig. \ref{Fig for strategy comparison}). 
\item \textit{Design of image-matched wavelets}: wherein wavelets have been designed from compressively sensed images and used for images reconstruction. 
\end{enumerate}

Based on the above observations, we would like to compare the performance of the proposed CS-based image reconstruction (with all three novelties: Proposed use of PCI sensing matrix, proposed wavelet decomposition, and proposed matched 5/3 length wavelet design) with the existing CS-based reconstruction (Gaussian sensing matrix, existing wavelet decomposition, and dB4 wavelet). Fig. \ref{FigForInitialVsFinalComparison} shows CS-based reconstruction results in terms of PSNR averaged over 10 iterations on three images `Beads', `Lena', and `House'.

\textit{Discussion:} From Fig. \ref{FigForInitialVsFinalComparison} we observe that the proposed methodology performs consistently better at all sampling ratios with the following observations:
\begin{enumerate}
\item As the image changes from being rich in high frequency (`Beads') to mid-frequency (`Lena') to low frequency (`House') content, better and better reconstruction performance is observed. This is owing to the fact that the proposed matched wavelet design works best for signals rich in low frequency contents.
\item At the higher sampling ratio of 90\%, performance gain over standard methodology is 3 dB with `Beads', 4 dB with `Lena', and 13 dB with `House'. Since at higher sampling ratios, most of the input image samples are available upfront, hence, matched wavelet design is optimum. This provides very good performance and huge improvement over the existing methodology, particularly, for images rich in low frequency content.
\item At the lower sampling ratio of 10\%, performance gain over standard methodology is 6.5 dB with `Beads', 9 dB with `Lena', and 11 dB with `House'. In fact, standard methodology with standard wavelets almost fails in reconstructing images with any good quality at very low sampling ratios, while the proposed methodology still performs good. For visual clarity, Fig. \ref{Fig for Gaussian reconstruction} shows reconstructed images with the existing and the proposed methodology at 10\% sampling ratio. 
\end{enumerate}
\begin{figure*}[!ht]
\centering
\begin{subfigure}{0.98\textwidth}
\begin{center}
\includegraphics[scale=0.19]{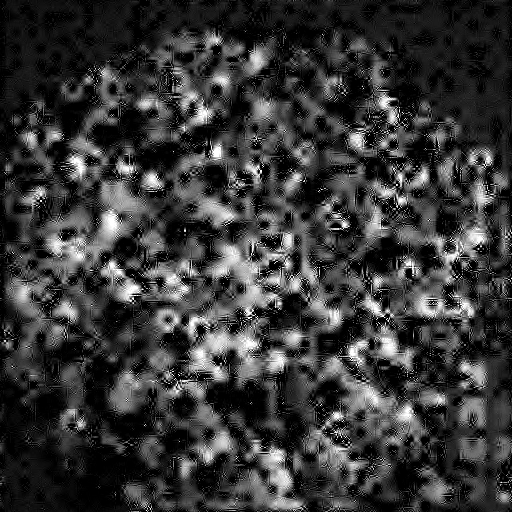}
\includegraphics[scale=0.19]{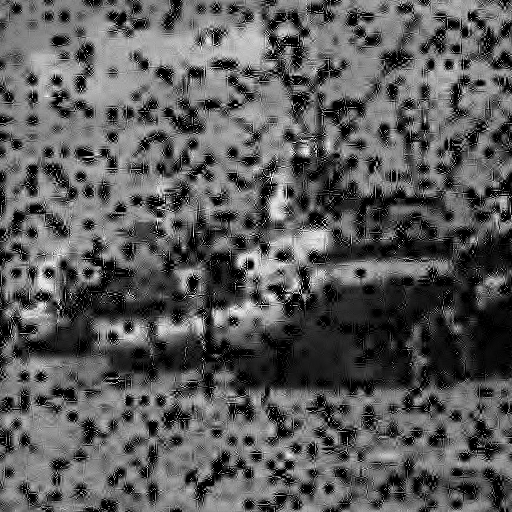}
\includegraphics[scale=0.19]{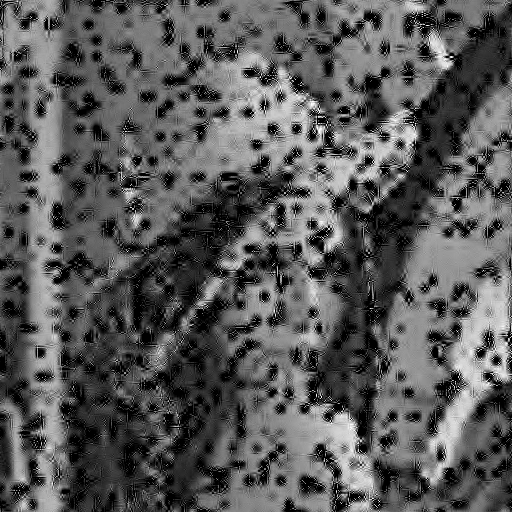}
\includegraphics[scale=0.19]{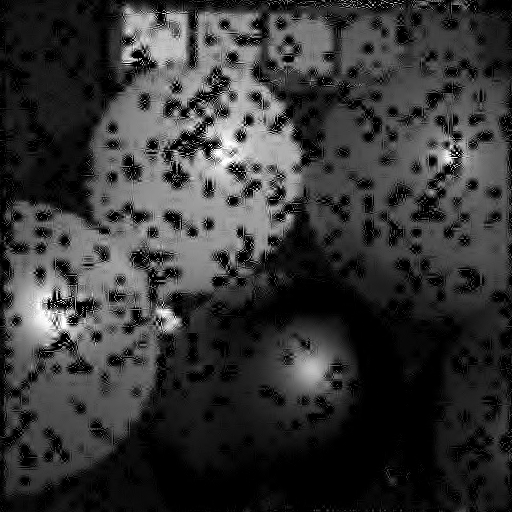}
\includegraphics[scale=0.19]{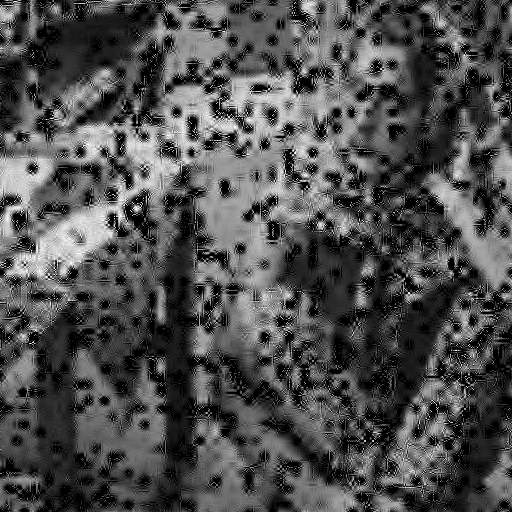}
\caption{Reconstructed with db4 wavelet}
\end{center}
\end{subfigure}
%\vspace{1em}
\begin{subfigure}{0.98\textwidth}
\begin{center}
\includegraphics[scale=0.19]{Beads_10_matched.png}
\includegraphics[scale=0.19]{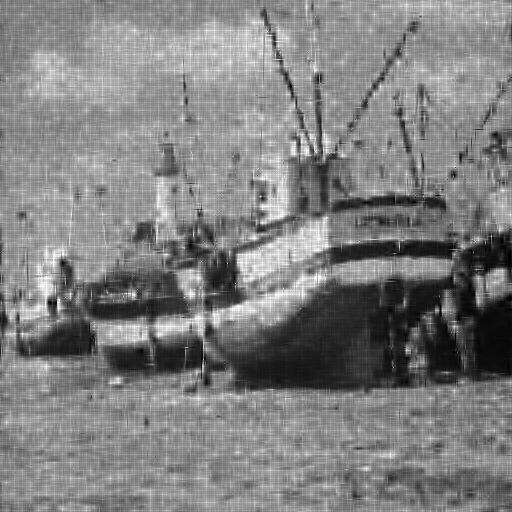}
\includegraphics[scale=0.19]{Lena_10_matched.png}
\includegraphics[scale=0.19]{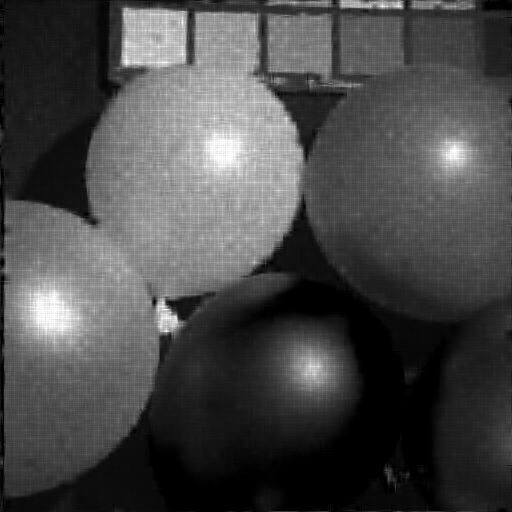}
\includegraphics[scale=0.19]{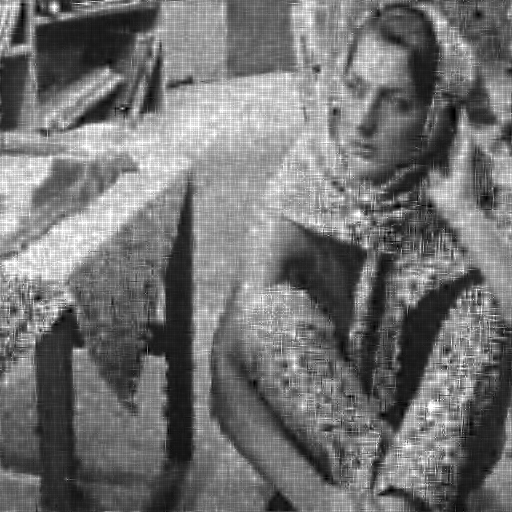}
\caption{Reconstructed with matched wavelet}
\end{center}
\end{subfigure}
%\vspace{1em}
\begin{subfigure}{0.98\textwidth}
\begin{center}
\includegraphics[scale=0.19]{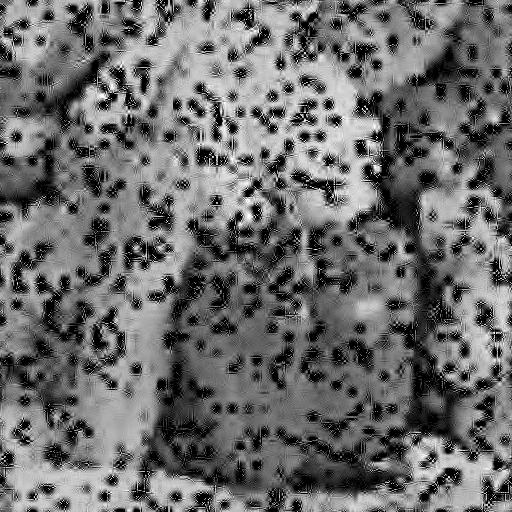}
\includegraphics[scale=0.19]{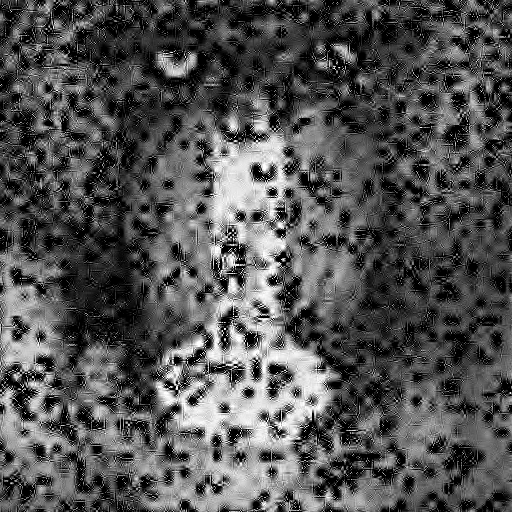}
\includegraphics[scale=0.19]{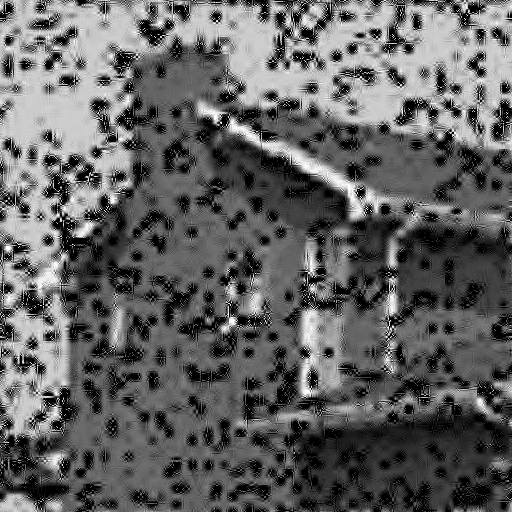}
\includegraphics[scale=0.19]{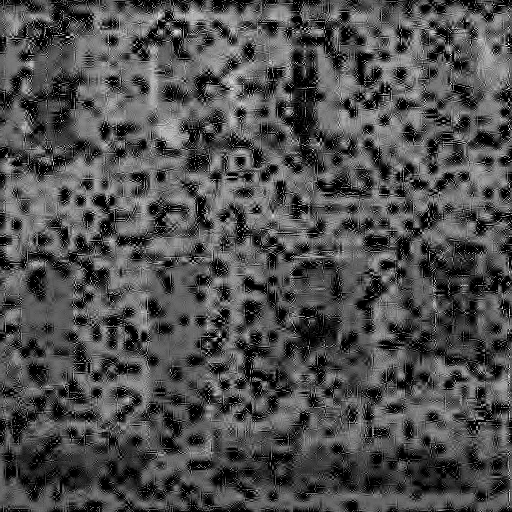}
\includegraphics[scale=0.19]{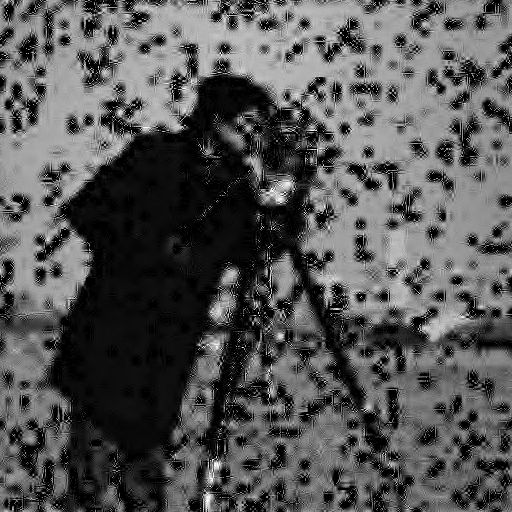}
\caption{Reconstructed with db4 wavelet}
\end{center}
\end{subfigure}
%\vspace{1em}
\begin{subfigure}{0.98\textwidth}
\begin{center}
\includegraphics[scale=0.19]{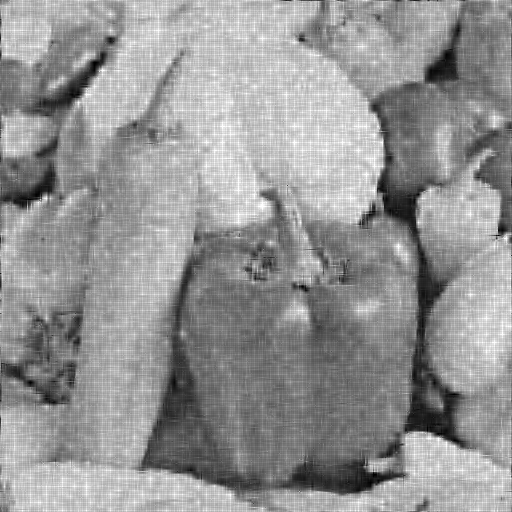}
\includegraphics[scale=0.19]{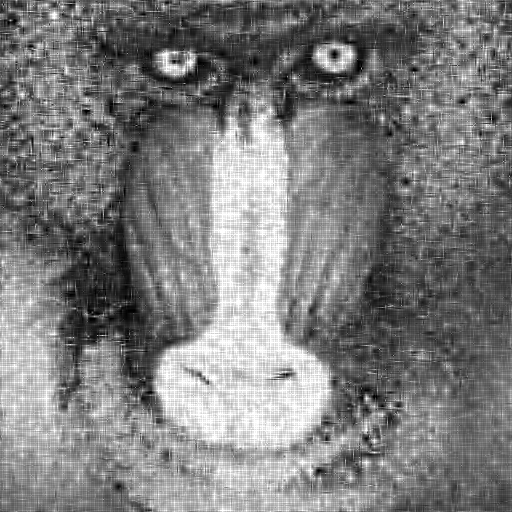}
\includegraphics[scale=0.19]{House_10_matched.png}
\includegraphics[scale=0.19]{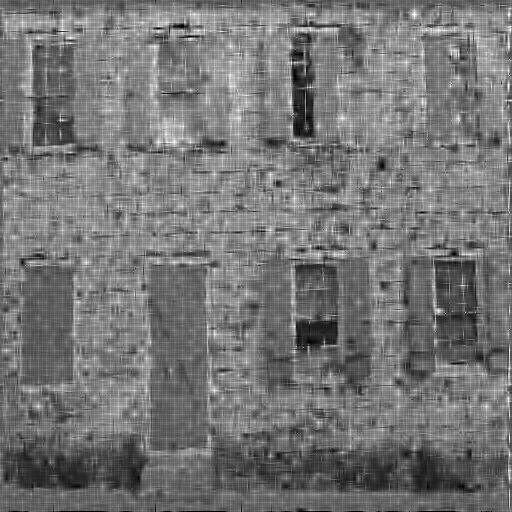}
\includegraphics[scale=0.19]{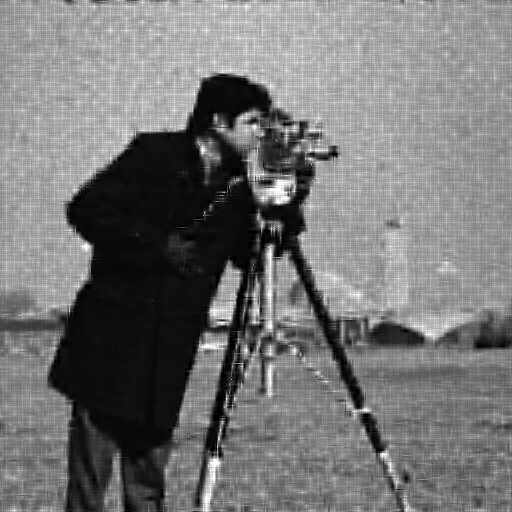}
\caption{Reconstructed with matched wavelet}
\end{center}
\end{subfigure}
\vspace{-0.8em}
\captionsetup{justification=centering}
\caption{\small Comparison of standard versus matched wavelet in CS-based reconstruction on images sensed at 10\% sampling ratio using PCI sensing matrix and the proposed wavelet decomposition strategy}
\label{Image reconstruction comparison}
\vspace{-2em}
\end{figure*}
\vspace{-1em}
\subsection{Experiment-2: Comparison of standard wavelets with matched wavelets in CS-based image reconstruction}
\vspace{-0.2em}
In this subsection, we present CS-based image reconstruction results using the PCI sensing matrix and the proposed wavelet decomposition strategy. Table-\ref{comparison with standard wavelets} shows comparison on reconstruction accuracy in terms of PSNR averaged over 10 iterations obtained with standard wavelets and that obtained with image-matched wavelets designed from compressively sensed images with 5/3 length filters.

We have considered three standard wavelets: orthogonal Daubechies' wavelets `db2', `db4', and biorthogonal standard 5/3 wavelet. While orthogonal wavelets are widely used in applications, biorthogonal wavelets take care of boundary effects and provide better compression results as compared to orthogonal wavelet \cite{usevitch2001tutorial}, \cite{lightstone1997low}. This is to note that our image-matched wavelets are biorthogonal by design. 
\vspace{-0.2em}

\textit{Discussion:} From Table-\ref{comparison with standard wavelets}, we observe much better reconstruction accuracy with matched wavelets compared to standard wavelets in terms of average PSNR over all the sampling ratio ranging from 90\% to 10\%. Orthogonal db4 wavelet performs better than other standard wavelets but its performance is still inferior to our matched wavelets. At the sampling ratio of 90\%, matched wavelet provides an improvement upto 3 dB, while we observe an improvement upto 11.5dB at 10\% sampling ratio. 
\vspace{-0.2em}

Further, we observe that while standard wavelets fail almost completely at lower sampling ratio of 10\% with a reconstruction PSNR of approximately 10dB only, matched wavelets are able to reconstruct images with a PSNR above 20dB. 

For better visual clarity, we have also shown images reconstructed from compressively sensed images at 10\% sampling ratio using the PCI matrix with matched 5/3 wavelet and standard db4 wavelet for all ten images in Fig. \ref{Image reconstruction comparison}. From the figure, it can be clearly noticed that the existing wavelet db4 is not able to reconstruct full images whereas matched wavelets provide good reconstruction quality.  
\vspace{-0.8em}
\section{Conclusion}
\label{Section for Conclusion}
\vspace{-0.2em}
In this paper, we have proposed a joint framework wherein image-matched wavelets have been designed from compressively sensed images and later, used for reconstruction or recovery of the full image. We have also proposed to use a partial canonical identity sensing matrix for CS-based reconstruction of images that performs much faster compared to the existing Gaussian or Bernoulli matrices and hence, is suited for time-bound reconstruction based applications. Although there is a slight degradation in performance with the proposed sensing matrix but that is easily covered up by the matched wavelet design. We have also provided a new multi-level L-Pyramid wavelet decomposition strategy that works much more efficiently compared to the standard wavelet decomposition method. Overall, the proposed work with different sensing matrix, new wavelet decomposition strategy, and image-matched wavelets provide much better results in CS-based image reconstruction compared to the existing practices.
\vspace{-1.5em}
\ifCLASSOPTIONcaptionsoff
  \newpage
\fi
\bibliographystyle{IEEEtran}
\bibliography{refs}

% You can push biographies down or up by placing
% a \vfill before or after them. The appropriate
% use of \vfill depends on what kind of text is
% on the last page and whether or not the columns
% are being equalized.

%\vfill

% Can be used to pull up biographies so that the bottom of the last one
% is flush with the other column.
%\enlargethispage{-5in}

% that's all folks
\end{document}